\newtheorem{theorem}{Theorem}
\newtheorem{definition}{Definition}
\newtheorem{lemma}{Lemma}
\newtheorem{remark}{Remark}
\newtheorem{assumption}{Assumption}
\newcommand{\RR}{{\mathbb{R}}}
\newcommand{\NN}{{\mathbb{N}}}
\newcommand{\EE}{{\mathbb{E}}}
\newcommand{\PP}{{\mathbb{P}}}
\newcommand{\JJ}{{\mathbb{J}}}
\newcommand{\FF}{{\mathbb{F}}}
\newcommand{\mc}{\mathcal}
\newcommand{\norm}[1]{\|#1\|}
\newcommand{\normsq}[1]{\|#1\|^2}
\newcommand{\EEk}[1]{\EE\left[#1|\mc F_k\right]}
\newcommand{\op}{\operatorname}
\newcommand{\bs}{\boldsymbol}
\newcommand{\fineass}{\hfill\small$\blacksquare$}
\begin{document}
\bstctlcite{IEEEexample:BSTcontrol}

\title{Training Generative Adversarial Networks via stochastic Nash games}

\author{Barbara Franci$^{1}$ and Sergio Grammatico$^{1}$% <-this % stops a space
%\thanks{*This work was not supported by any organization}% <-this % stops a space
\thanks{$^{1}$The authors are with the Delft Center for System and Control, TU Delft, The Netherlands
        {\tt\footnotesize \{b.franci-1, s.grammatico\}@tudelft.nl}}%
\thanks{This work was partially supported by NWO under research projects OMEGA (613.001.702) and P2P-TALES (647.003.003), and by the ERC under research project COSMOS (802348).}}

\markboth{Journal of \LaTeX\ Class Files,~Vol.~14, No.~8, August~2015}%
{Shell \MakeLowercase{\textit{et al.}}: Bare Demo of IEEEtran.cls for IEEE Journals}

\maketitle

\begin{abstract}
Generative adversarial networks (GANs) are a class of generative models with two antagonistic neural networks: a generator and a discriminator. These two neural networks compete against each other through an adversarial process that can be modeled as a stochastic Nash equilibrium problem. Since the associated training process is challenging, it is fundamental to design reliable algorithms to compute an equilibrium. 

%For instance, the antagonistic behaviour can be modelled with a stochastic Nash equilibrium problem. Then, exploiting the connection with variational inequalities it is possible to design an algorithm for seeking a solution.
%by exploiting the fact that it can be rewritten as a stochastic Nash equilibrium problem. Then, thanks to the connection with variational inequalities it is possible to design an algorithm for seeking a solution.

In this paper, we propose a stochastic relaxed forward-backward (SRFB) algorithm for GANs and we show convergence to an exact solution 
when an increasing number of data is available. We also show convergence of an averaged variant of the SRFB algorithm to a neighborhood of the solution when only few samples are available. In both cases, convergence is guaranteed when the pseudogradient mapping of the game is monotone. This assumption is among the weakest known in the literature. Moreover,
we apply our algorithm to the image generation problem. 
\end{abstract}

\begin{IEEEkeywords}
Generative Adversarial Networks, stochastic Nash equilibrium problems, variational inequalities, two-player game.
\end{IEEEkeywords}

\section{Introduction}
\subsection{Generative Adversarial Networks}
Generative adversarial networks (GANs) are an example of unsupervised generative model. The basic idea is that, given some samples drawn from a probability distribution, the neural network takes a training set and learns how to obtain an estimate of such distribution. Most of the literature on GANs focuses on sample generation (especially image generation), but they can also be designed to explicitly estimate a probability distribution \cite{goodfellow2016,goodfellow2014,wang2019}.

The learning process of the neural networks in GANs is made via an adversarial process, in which not only the generative model, but also the opponent, are simultaneously trained. Indeed, there are two neural network classes: the generator that creates data according to a given distribution, and the discriminator that tries to recognize if the samples come from the training data or from the generator. 
As an example, the generator can be considered as a team of counterfeiters, trying to produce fake currency, while the discriminative model, i.e., the police, tries to detect the counterfeit money \cite{goodfellow2014}. To succeed in this game, the former must learn to reproduce money that are indistinguishable from the original currency, while the discriminator must recognize the samples that are drawn from the same distribution as the training data. Through the competition, both teams improve their methods until the counterfeit currency is indistinguishable from the original.

Besides this simplistic interpretation, the subject has been widely studied in the literature, because it has many and various applications. In addition to the classic image generation problem \cite{goodfellow2016,song2020}, GANs have been applied in medicine, e.g., to improve the diagnostic performance of the low-dose computed tomography method \cite{yang2018} and recently to detect pneumonia in potential Covid-19 patients \cite{khalifa2020}. Moreover, they can be used for correcting images taken under adverse weather conditions (as rain) \cite{zhang2019}, \cite{xiang2019}, editing facial attributes \cite{zhang2020}, image inpainting \cite{chen2020research,chen2020improved} as well as Pacman \cite{kim2020}.

\subsection{Stochastic Nash equilibrium problems}
The reason why these networks are called \textit{adversarial} is related to the fact that they can be modeled as a game, where each agent payoff depends on the variables of the other agent \cite{rotabulo2016,oliehoek2017}. However, the players in GANs can be also considered as cooperative players since they share information with each other \cite{goodfellow2016,oliehoek2017,heusel2017}.
Since there are only the generator and the discriminator, the problem is an instance of a two-player game and it can be also casted as a zero-sum game, depending on choice of the cost functions. From a more general point of view, the class of games that suits the GAN problem is that of stochastic Nash equilibrium problems (SNEPs) where each agent tries to minimize its expected value cost function. 
Given their connection with game theory, GANs have received theoretical attention as well, both on the study of the associated Nash equilibrium problem \cite{oliehoek2017,mazumdar2020} and on the design of algorithms to improve the learning and training process \cite{mazumdar2020,gidel2019}.

Among the available methods to solve a SNEP, an elegant approach is to recast the problem as a stochastic variational inequality (SVI) \cite{gidel2019,facchinei2007,tao2014}.
The advantage of this approach is that there are many algorithms available for finding a solution of an SVI, some of them already applied to GANs \cite{gidel2019,mertikopoulos2018}. For instance, the most used in machine learning is the forward-backward algorithm \cite{robbins1951}, also known as gradient descent \cite{bruck1977}, which has the disadvantage that, to ensure convergence, the mapping should be cocoercive, i.e., strongly monotone and Lipschitz continuous. Since the GAN mapping is often non-convex \cite{heusel2017,gidel2019}, one would prefer an algorithm that is guaranteed to converge for at most monotone mappings. In this case, it is possible to consider the extragradient (EG) algorithm \cite{iusem2017,yousefian2014,mishchenko2020} and the forward-backward-forward (FBF) algorithm \cite{staudigl2019}. The main downside of these two methods is that they require two costly evaluations of the pseudogradient mapping, which is computationally expensive. Due to the large-scale problem size, the ideal algorithm should not be computationally demanding and it should be guaranteed to converge under non-restrictive assumption on the pseudogradient mapping.

\subsection{Contribution}
Motivated by the need for computationally light algorithms converging under weak assumptions, we propose an algorithm that requests only one computation of the pseudogradient mapping at each iteration and we show its convergence under mere monotonicity. Specifically, our contributions are summarized next.
\begin{itemize}
\item We propose a stochastic relaxed forward-backward (SRFB) algorithm and a variant with averaging (aSRFB) for the training process of GANs. The SRFB involves only one evaluation of the pseudogradient mapping at each iteration, therefore it is computationally cheaper than the EG and FBF algorithms. 
\item We prove its convergence for monotone mappings, which is considered the ``weakest possible" assumption on the pseudogradient mapping \cite{facchinei2010}. Specifically, whenever only a finite number of samples are available, we prove almost sure convergence to a neighborhood of the solution, while if an increasing set of samples is available, then the algorithm reaches an equilibrium almost surely.
\item We apply our algorithm to the image generation problem and compare it with the extragradient scheme.
\end{itemize}

Our SRFB algorithm is inspired by \cite{malitsky2019,franci2020} and a preliminary heuristic application to GAN was presented in \cite{franci2020cdc}. Therein, we do not prove convergence of the SRFB algorithm nor of its aSRFB variant. Moreover, in \cite{franci2020cdc}, we only run numerical simulations on synthetic toy examples while in this paper we train the two neural networks for the popular image generation problem with real benchmark data.

\subsection{Related work}
Due to the connection between SNEPs and SVIs, many algorithms for variational inequalities have been applied to GANs \cite{gidel2019,tao2014}.

The first one is the forward-backward (FB) algorithm \cite{franci2020tac,robbins1951}, also known as gradient descent \cite{bruck1977}. It is the most used, even if in many cases it has been proven to be non-convergent \cite{gidel2019,mescheder2018}. From an operator-theoretic perspective, the FB is not convergent because the pseudogradient mapping should be cocoercive \cite{grammatico2018} and this is almost never the case in GANs. 
From an algorithmic perspective, the iterates typically cycle in a neighbourhood of a solution without reaching it \cite{mescheder2018}.

Therefore, research has focused on the forward-backward-forward (FBF) algorithm and on the extragradient (EG) algorithm, that are guaranteed to converge for merely monotone mappings.
The FBF algorithm, first presented in \cite{tseng2000} and extended to the stochastic case in \cite{staudigl2019}, involves two evaluations of the pseudogradient mapping. A first attempt to apply the FBF algorithm for GANs is presented, along with a relaxed inertial FBF algorithm, in \cite{vuong2020}.
The extragradient (EG) method was first proposed in \cite{korpelevich1976} and extended many years later to the stochastic case in \cite{iusem2017,yousefian2014} and to GANs in \cite{gidel2019,mishchenko2020}.
The EG algorithm requires two evaluations of the pseudogradient mapping as well, therefore in \cite{gidel2019}, a variation is proposed. This involves an extrapolation from the past, i.e., it uses the evaluation of the mapping at previous time steps. In \cite{gidel2019} the authors propose also the FB and the EG algorithms with averaging.

The averaging technique was first proposed for VIs in \cite{bruck1977} and studied more recently in \cite{gidel2019,yazici2018}.
In \cite{yazici2018}, the authors examine two different techniques for averaging: the moving average, which computes the time-average of the iterates, and the exponential moving average which computes an exponentially discounted sum.
For both the techniques, they show that, despite convergence cannot be proven, the averaging may help stabilizing the iterates, driving them towards a neighborhood of the solution.

While \cite{yazici2018} has mostly a heuristic approach, theoretical convergence studies are presented in \cite{mescheder2017, mescheder2018}. Therein, the authors show that local convergence and stability properties of GAN training depends on the eigenvalues of the Jacobian of the associated gradient vector field. 
 
Another theoretical aspect that has not been extensively addressed yet is the inherent relation between GANs and game theory. In \cite{oliehoek2017}, the authors formally introduce Generative Adversarial Network Games, describing (and seeking for) the Nash equilibria of the zero-sum game as saddle points in mixed strategies.
The study of saddle-point problems is also studied, in connection with GANs, in \cite{mertikopoulos2018}. The authors in \cite{heusel2017}, instead, prove that Adam \cite{kingma2014}, a second order method for GANs, converges to a stationary local Nash equilibrium.

\subsection{Notation} 
Let $\RR$ indicate the set of real numbers and let $\bar\RR=\RR\cup\{+\infty\}$.
$\langle\cdot,\cdot\rangle:\RR^n\times\RR^n\to\RR$ denotes the standard inner product and $\norm{\cdot}$ represents the associated Euclidean norm. Given $N$ vectors $x_{1}, \ldots, x_{N} \in \RR^{n}$, $\boldsymbol{x} :=\op{col}\left(x_{1}, \dots, x_{N}\right)=\left[x_{1}^{\top}, \dots, x_{N}^{\top}\right]^{\top}.$
For a closed set $C \subseteq \RR^{n},$ the mapping $\op{proj}_{C} : \RR^{n} \to C$ denotes the projection onto $C$, i.e., $\op{proj}_{C}(x)=\op{argmin}_{y \in C}\|y-x\|$.

\section{Generative Adversarial Networks}

The idea behind generative adversarial networks (GANs) is to set up an antagonistic training process between the generator and the discriminator. Typically, the generator and the discriminator are represented by two deep neural networks, and accordingly, they are denoted by two functions, differentiable with respect to their inputs and parameters.
The generator creates samples that aim at resembling the distribution of the training data. The generator is therefore trained to fool the discriminator who, in turn, examines the samples to determine whether they are real or fake. 
This adversarial mechanism can be modeled as a game where the generator and the discriminator represent the players, who want to improve their \textit{payoff} \cite{oliehoek2017}.

Formally, the generator is a neural network class, represented by a differentiable function $g$, with parameters vector $x_{\text{g}} \in \Omega_{\text{g}}\subseteq\RR^{n_{\text{g}}}$. Let us denote the (fake) output of the generator with $g(z,x_{\text{g}}) \in \mathbb{R}^{q}$ where the input $z$ is a random noise vector drawn from the data prior distribution, $z \sim p_{\text{z}}$ \cite{oliehoek2017}.
In game-theoretic terms, the \textit{strategies} of the generator are the parameters $x_{\text{g}}$ that allow $g$ to generate the fake output.

Similarly to the generator, the discriminator is a neural network class with parameter vector $x_{\text{d}} \in \Omega_{\text{d}}\subseteq\RR^{n_{\text{d}}}$ and a single output $d(v,x_{\text{d}}) \in[0,1]$ that indicates how good is the input $v$. The output of the discriminator can be interpreted as the probability of being real that $d$ assigns to an element $v$. The \textit{strategies} of the discriminator are the parameters $x_{\text{d}}$.

Usually \cite{gidel2019,goodfellow2014}, the payoff of the discriminator is given by the function
\begin{equation}\label{eq_payoff_zero}
J_{\text{d}}(x_{\text{g}},x_{\text{d}})=\EE[\phi(d(\cdot,x_{\text{d}})]-\EE[\phi(d(g(\cdot,x_{\text{g}}),x_{\text{d}}))],
\end{equation}
where $\phi:[0,1] \rightarrow \mathbb{R}$ is a measuring function. The typical choices for $\phi$ are the Kullback-Leibler divergence or the Jensen-Shannon divergence (a logarithm) as in \cite{goodfellow2014} but other options (such as the Wasserstein distance) are proposed in the literature \cite{wang2019,arjovsky2017}. Regardless, the mapping in \eqref{eq_payoff_zero} can be interpreted as the distance between the fake value and the real one. The payoff of the generator ($J_\text{g}$) instead, depends on how we describe the game. In fact, the problem can be modeled as a two-player game, or as a zero-sum game, depending on the cost functions. 
To cast the problem as a zero-sum game, the functions $J_\text{g}$ and $J_\text{d}$ should satisfy the following relation: 
\begin{equation}\label{eq_rel_zero}
J_{\text{g}}(x_{\text{g}},x_{\text{d}})=-J_{\text{d}}(x_{\text{g}},x_{\text{d}}).
\end{equation}
Then, we can rewrite it as a minmax problem, i.e.,
\begin{equation}\label{eq_minmax}
\min_{x_{\text{g}}}\max_{x_{\text{d}}} J_{\text{d}}(x_{\text{g}},x_{\text{d}}).
\end{equation}
In words, \eqref{eq_minmax} means that the generator aims at minimizing the distance between the real value and the fake one, while the discriminator wants to maximize such a distance, i.e., $d$ aims at recognizing the generated data.

When the generator has a different payoff function from the discriminator, e.g., given by \cite{gidel2019}
\begin{equation}\label{eq_cost_gen}
J_{\text{g}}(x_{\text{g}},x_{\text{d}})=\EE[\phi(d(g(\cdot,x_{\text{g}}),x_{\text{d}}))],
\end{equation}
then the problem is not a zero-sum game.

Since the two-player game with cost functions \eqref{eq_payoff_zero} and \eqref{eq_cost_gen} and the zero-sum game with cost function \eqref{eq_payoff_zero} and relation \eqref{eq_rel_zero} have the same pseudogradient mapping (defined Section \ref{sec_snep}), it can be proven that the two equilibria are strategically equivalent \cite[Th. 10]{oliehoek2017}.

\section{Stochastic Nash equilibrium problems}\label{sec_snep}

In this section, let us describe the GAN game as a generic stochastic Nash equilibrium problem (SNEP). 

The two neural network classes are indexed by the set $\mc I=\{\text{g},\text{d}\}$. Each agent $i\in\mc I$ has a decision variable $x_i\in\Omega_i\subseteq\RR^{n_i}$. In general, the local cost function of agent $i\in \mc I$ is defined as 
\begin{equation}\label{eq_cost_stoc}
\JJ_i(x_i,x_j)=\EE_\xi[J_i(x_i,x_j,\xi(\omega))],
\end{equation}
for some measurable function $J_i:\mc \RR^{n}\times \RR^d\to \RR $ where $n=n_{\text{d}}+n_{\text{g}}$. The cost function $\JJ_i $ of agent $i\in\mc I $ depends on its local variable $x_i$, the decisions of the other player $x_j$, $j\neq i$, and the random variable $\xi:\Xi\to\RR^d $ that represent the uncertainty. 
The latter arises when we do not know the distribution of the random variable or it is computationally too expansive to compute the expected value. In practice, this means that we have access only to a finite number of samples from the data distribution.
Given the probability space $(\Xi, \mc F, \PP) $, $\EE_\xi $ indicates the mathematical expectation with respect to the distribution of the random variable $\xi(\omega) $\footnote{From now on, simplicity, we use $\xi $ instead of $\xi(\omega) $ and $\EE $ instead of $\EE_\xi $.}. Let us suppose that $\EE[J_i(\bs{x},\xi)] $ is well defined for all the feasible $\bs{x}=\op{col}(x_{\text{g}},x_{\text{d}})\in\bs \Omega=\Omega_{\text{g}}\times\Omega_{\text{d}} $. 
For our theoretical analysis, some assumptions on the cost function and the feasible set should be postulated. The following assumptions are standard in monotone game theory \cite{facchinei2007vi,ravat2011}.
\begin{assumption}\label{ass_J}
For each $i \in \mc I,$ the set $\Omega_{i}$ is nonempty, compact and convex.\\
For each $i,j \in \mc I $, $i\neq j$, the function $\JJ_{i}(\cdot, x_j) $ is convex and continuously differentiable.
%\fineass\end{assumption}
%\begin{assumption}\label{ass_J_exp}
For each $i\in\mc I $, $j\neq i$ and for each $\xi \in \Xi $, the function $J_{i}(\cdot,x_j,\xi) $ is convex, continuously differentiable and Lipschitz continuous with the constant $\ell_i(x_j,\xi) $ integrable in $\xi $. The function $J_{i}(x_i,x_j,\cdot) $ is measurable for each $x_j $, $j\neq i$.
%\fineass\end{assumption}
%\begin{assumption}\label{ass_omega}
%For each $i \in \mc I,$ the set $\Omega_{i}$ is nonempty, compact and convex.
\fineass\end{assumption}

%\begin{algorithm}[t]
%\caption{Stochastic Relaxed forward-backward (SRFB)}\label{algo_i}
%Initialization: $x_i^0 \in \Omega_i$\\
%Iteration $k$: Agent $i$ receives $x_j^k$ for $j \neq i$, then updates:
%\begin{subequations}
%\begin{align}
%\bar{x}_i^{k} &=(1-\delta) x_i^k+\delta\bar{x}_i^{k-1} \label{eq_relax}\\ 
%x_i^{k+1}&=\op{proj}_{\Omega_i}[\bar{x}_i^k-\lambda_{i}F^{\textup{VR}}_{i}(x_i^k, x_j^k,\xi_i^k)]
%\end{align}
%\end{subequations}
%\end{algorithm}

Given the decision variable of the other agent, the aim of each agent $i$ is to choose a strategy $x_i$ that solves its local optimization problem, i.e.,
\begin{equation}\label{eq_game}
\forall i \in \mc I: \quad
\min\limits _{x_i \in \Omega_i}  \JJ_i\left(x_i, x_j\right).\\ 
\end{equation}
The solution of the coupled optimization problems in \eqref{eq_game} that we are seeking is a stochastic Nash equilibrium (SNE) \cite{ravat2011}.

\begin{definition}\label{def_GNE}
A stochastic Nash equilibrium is a collective strategy $\bs x^*=\op{col}(x_{\text{g}}^*,x_{\text{d}}^*)\in\bs{\Omega} $ such that for all $i \in \mc I $
 $$\JJ_i(x_i^{*}, x_j^{*}) \leq \inf \{\JJ_i(y, x_j^{*})\; | \; y \in \Omega_i\}.$$%\vspace{-.5cm} $$
% \fineass
\end{definition}
In other words, a SNE is a pair of strategies where neither the generator, nor the discriminator, can decrease its cost function by unilaterally deviating from its decision.

While existence of a SNE of the game in \eqref{eq_game} is guaranteed, under Assumption \ref{ass_J} \cite[Section 3.1]{ravat2011}, uniqueness does not hold in general \cite[Section 3.2]{ravat2011}.

To seek for a Nash equilibrium, we rewrite the problem as a stochastic variational inequality (SVI). 
Let us first denote the pseudogradient mapping as
\begin{equation}\label{eq_grad}
\FF(\bs{x})=\left[\begin{array}{c}
\EE[\nabla_{x_{\text{g}}} J_{\text{g}}(x_{\text{g}}, x_{\text{d}})]\\
\EE[\nabla_{x_{\text{d}}} J_{\text{d}}(x_{\text{d}}, x_{\text{g}})]
\end{array}\right].
\end{equation} 
We note that the possibility to exchange the expected value and the pseudogradient is ensured by Assumption \ref{ass_J} \cite{ravat2011}.

Then, the associated SVI reads as
\begin{equation}\label{eq_SVI}
\langle \FF(\bs x^*),\bs x-\bs x^*\rangle\geq 0\text { for all } \bs x \in \bs{\Omega}.
\end{equation}

\begin{remark}\label{remark_vsne}
If Assumption \ref{ass_J} holds, then $\bs x^*\in\bs{\Omega}$ is a Nash equilibrium of the game in \eqref{eq_game} if and only if $\bs x^*$ is a solution of the SVI in \eqref{eq_SVI} \cite[Prop. 1.4.2]{facchinei2007}, \cite[Lem. 3.3]{ravat2011}. 

Moreover, under Assumption \ref{ass_J}, the solution set of $\op{SVI}(\bs{\Omega},\FF) $ is non empty and compact, i.e., $\op{SOL}(\bs{\Omega},\FF)\neq\varnothing $ \cite[Corollary 2.2.5]{facchinei2007} and an equilibrium exists.
\fineass\end{remark}

%\begin{remark}\need{shall we put Remark 1 and 2 together?}
%Under Assumptions \ref{ass_J} and \ref{ass_omega}, the solution set of $\op{SVI}(\bs{\Omega},\FF) $ is non empty and compact, i.e., $\op{SOL}(\bs{\Omega},\FF)\neq\varnothing $ \cite[Corollary 2.2.5]{facchinei2007}, that is, a v-SNE exists.
%\fineass\end{remark}

In light of Remark \ref{remark_vsne}, we call variational equilibria (v-SNE) the solution of the $\op{SVI}(\bs{\Omega} , \FF) $ in (\ref{eq_SVI}) where $\FF $ is as in (\ref{eq_grad}), i.e., the solution of the SVI that are also SNE.

\section{Stochastic relaxed forward-backward algorithms}
%\begin{algorithm}[t]
%\caption{Stochastic Relaxed forward-backward (SRFB)}\label{algo_i}
%Initialization: $x_i^0 \in \Omega_i$\\
%Iteration $k$: Agent $i$ receives $x_j^k$ for $j \neq i$, then updates:
%\begin{subequations}
%\begin{align}
%\bar{x}_i^{k} &=(1-\delta) x_i^k+\delta\bar{x}_i^{k-1} \label{eq_relax}\\ 
%x_i^{k+1}&=\op{proj}_{\Omega_i}[\bar{x}_i^k-\lambda_{i}F^{\textup{VR}}_{i}(x_i^k, x_j^k,\xi_i^k)]
%\end{align}
%\end{subequations}
%\end{algorithm}

%\begin{algorithm}[t]
%\caption{Stochastic Relaxed forward-backward with averaging (aSRFB)}\label{algo_i2}
%Initialization: $x_i^0 \in \Omega_i$\\
%Iteration $k\in\{1,\dots,K\}$: Agent $i$ receives $x_j^k$ for $j \neq i$, then updates:
%\begin{subequations}
%\begin{align}
%\bar{x}_i^{k} &=(1-\delta) x_i^k+\delta\bar{x}_i^{k-1} \label{eq_relax2}\\ 
%x_i^{k+1}&=\op{proj}_{\Omega_i}[\bar{x}_i^k-\lambda_{i}F^{\textup{SA}}_{i}(x_i^k, x_j^k,\xi_i^k)]
%\end{align}
%\end{subequations}
%Iteration $K$: $X^K_i=\frac{\sum\nolimits_{k=1}^K\lambda_kx_i^k}{\sum\nolimits_{k=1}^K\lambda_k}$
%\end{algorithm}
In this section, we propose two algorithms for solving the SNEP associated to the GANs process: a stochastic relaxed forward backward (SRFB) algorithm and its variant with averaging (aSRFB). The iterations read as in Algorithm \ref{algo_i} and Algorithm \ref{algo_i2}, respectively and they represent the steps for each agent $i\in\{\textup{g},\textup{d}\}$.

Algorithm \ref{algo_i} and Algorithm \ref{algo_i2} differ, besides the presence of the averaging step, on the choice of the approximation used for the pseudogradient mapping.
Moreover, we note that the averaging step in Algorithm \ref{algo_i2}, namely,
\begin{equation}\label{eq_ave}
\bs X^{K} = \frac{\sum_{k=1}^{K} \lambda_{k} \bs x^k}{S_{K}}, \quad S_{K} = \sum_{k=1}^{K} \lambda_{k}
\end{equation}
can be implemented in a first-order fashion as 
\begin{equation}\label{eq_ave_online}
\bs X_K=(1-\tilde{\lambda}_{K})\bs X^{K-1}+\tilde{\lambda}_{K} \bs x^{K}
\end{equation}
for some $ \tilde{\lambda}_{K} \in[0, 1]$. Moreover, let us remark that \eqref{eq_ave_online} is different from \eqref{eq_relax} and \eqref{eq_relax2}. Indeed, in Algorithms \ref{algo_i} and \ref{algo_i2}, \eqref{eq_relax} and \eqref{eq_relax2} are convex combinations, with a constant parameter $\delta$, of the two previous iterates $\bs x^k$ and $\bar{\bs x}^{k-1}$, while the averaging in \eqref{eq_ave_online} is a weighted cumulative sum over the decision variables $\bs x^k$ for all the iterations $k\in\{1,\dots,K\}$, with time--varying weights $\{\tilde\lambda_k\}_{k=1}^K$.
The parameter $\tilde{\lambda}_K$ can be tuned to obtain uniform, geometric or exponential averaging \cite{gidel2019,yazici2018}.

\begin{algorithm}[t]
\caption{Stochastic Relaxed forward-backward (SRFB)}\label{algo_i}
Initialization: $x_i^0 \in \Omega_i$\\
Iteration $k$: Agent $i$ receives $x_j^k$ for $j \neq i$, then updates:
\begin{subequations}
\begin{align}
\bar{x}_i^{k} &=(1-\delta) x_i^k+\delta\bar{x}_i^{k-1} \label{eq_relax}\\ 
x_i^{k+1}&=\op{proj}_{\Omega_i}[\bar{x}_i^k-\lambda_{i}F^{\textup{VR}}_{i}(x_i^k, x_j^k,\xi_i^k)]
\end{align}
\end{subequations}
\end{algorithm}

Let us now describe the approximation schemes used in the definitions of the algorithms.
In the SVI framework, there are two main possibilities, depending on the samples available.

Using a finite, fixed number of samples is called stochastic approximation (SA) \cite{robbins1951} and it is widely used in the literature of SVIs, in conjunction with conditions on the step sizes to control the stochastic error \cite{yousefian2014,koshal2013}. In fact, unless the step size sequence is diminishing, it is only possible to prove convergence to a neighborhood of a solution. 
The stochastic approximation of the pseudogradient mapping, given one sample of the random variable reads as
\begin{equation}\label{eq_SA}
F^{\textup{SA}}(\bs x,\xi):=\left[\begin{array}{c}
\nabla_{x_{\text{g}}} J_{\text{g}}(x_{\text{g}}, x_{\text{d}},\xi_\text{g})\\
\nabla_{x_{\text{d}}} J_{\text{d}}(x_{\text{d}}, x_{\text{g}},\xi_\text{d})
\end{array}\right].
\end{equation}
$F^{\textup{SA}}$ uses one or a finite number, called mini-batch, of realizations of the random variable. 

When a huge number of samples is available, one can consider using a different approximation scheme, i.e.,
\begin{equation}\label{eq_SAA}
F^{\textup{VR}}(\bs x,\xi^k)=\left[\begin{array}{c}
\frac{1}{N_k}\sum_{s=1}^{N_k}\nabla_{x_{\text{g}}} J_i(x_{\text{g}}^k,x_{\text{d}}^k,\xi_{\text{g}}^{(s)})\\
\frac{1}{N_k}\sum_{s=1}^{N_k}\nabla_{x_{\text{d}}} J_i(x_{\text{d}}^k,x_{\text{g}}^k,\xi_{\text{d}}^{(s)})
\end{array}\right].
\end{equation}
In this case, an increasing number of samples, the batch size $N_k$, is taken at each iteration \cite{iusem2017}. The superscript VR stands for variance reduction and it is related to the property of the approximation error discussed in Remark \ref{remark_error}.

\begin{algorithm}[t]
\caption{Stochastic Relaxed forward-backward with averaging (aSRFB)}\label{algo_i2}
Initialization: $x_i^0 \in \Omega_i$\\
Iteration $k\in\{1,\dots,K\}$: Agent $i$ receives $x_j^k$ for $j \neq i$, then updates:
\begin{subequations}
\begin{align}
\bar{x}_i^{k} &=(1-\delta) x_i^k+\delta\bar{x}_i^{k-1} \label{eq_relax2}\\ 
x_i^{k+1}&=\op{proj}_{\Omega_i}[\bar{x}_i^k-\lambda_{i}F^{\textup{SA}}_{i}(x_i^k, x_j^k,\xi_i^k)]
\end{align}
\end{subequations}\\
Iteration $K$: $X^K_i=\frac{\sum\nolimits_{k=1}^K\lambda_kx_i^k}{\sum\nolimits_{k=1}^K\lambda_k}$
\end{algorithm}

\section{Convergence analysis}

\subsection{Basic technical assumptions}
With the aim of proving convergence to a solution (or to a neighborhood of one) of Algorithms \ref{algo_i} and \ref{algo_i2},
we start this section with some assumptions that are common to both the algorithms.

The following monotonicity assumption on the pseudogradient mapping is standard for SVI problems \cite{iusem2017,franci2020}, also when applied to GANs \cite{gidel2019} and it is the weakest possible to hope for global convergence.
\begin{assumption}\label{ass_mono}
$\FF$ in \eqref{eq_grad} is monotone, i.e., $\langle\FF(\bs x)-\FF(\bs y),\bs x-\bs y\rangle\geq0$ for all $\bs x,\bs y\in\bs \Omega$.
\fineass\end{assumption}

Let us now define the filtration $\mc F=\{\mc F_k\}$, that is, a family of $\sigma$-algebras such that $\mathcal{F}_{0} = \sigma\left(X_{0}\right)$,
$\mathcal{F}_{k} = \sigma\left(X_{0}, \xi_{1}, \xi_{2}, \ldots, \xi_{k}\right)$ for all $k \geq 1,$
and $\mc F_k\subseteq\mc F_{k+1}$ for all $k\geq0$. 
For all $k \geq 0,$ let us also define the stochastic error as
\begin{equation}\label{eq_error}
\epsilon_k= \hat F(\bs x^k,\xi^k)-\FF(\bs x^k),
\end{equation}
where $\hat F$ indicates one of the two possible approximation schemes.
In words, $\epsilon_k$ in \eqref{eq_error} is the distance between the approximation and the exact expected value mapping. Then, let us postulate that the stochastic error has zero mean and bounded variance, as usual in SVI \cite{gidel2019,iusem2017,franci2020}.

\begin{assumption}\label{ass_error}
The stochastic error is such that, for all $k\geq 0$, a.s.,
$\EE[\epsilon^k|\mc F_k]=0 \text{ and }\EE[\|\epsilon^k\|^2|\mc F_k]\leq\sigma^2.$
\fineass\end{assumption}

%\begin{algorithm}[t]
%\caption{Stochastic Relaxed forward-backward with averaging (aSRFB)}\label{algo_i2}
%Initialization: $x_i^0 \in \Omega_i$\\
%Iteration $k\in\{1,\dots,K\}$: Agent $i$ receives $x_j^k$ for $j \neq i$, then updates:
%\begin{subequations}
%\begin{align}
%\bar{x}_i^{k} &=(1-\delta) x_i^k+\delta\bar{x}_i^{k-1} \label{eq_relax2}\\ 
%x_i^{k+1}&=\op{proj}_{\Omega_i}[\bar{x}_i^k-\lambda_{i}F^{\textup{SA}}_{i}(x_i^k, x_j^k,\xi_i^k)]
%\end{align}
%\end{subequations}\\
%Iteration $K$: $X^K_i=\frac{\sum\nolimits_{k=1}^K\lambda_kx_i^k}{\sum\nolimits_{k=1}^K\lambda_k}$
%\end{algorithm}

\subsection{Convergence of Algorithm \ref{algo_i}}

We now state the convergence result for Algorithm \ref{algo_i}. First, let us postulate some assumptions functional to our analysis. We start with the batch size sequence, which should be increasing to control the stochastic error.

\begin{assumption}\label{ass_batch}
The batch size sequence $(N_k)_{k\geq 1}$ is such that, for some $b,k_0,a>0$,
$N_k\geq b(k+k_0)^{a+1}.$
\fineass\end{assumption}
\begin{remark}\label{remark_error}
Given $F^{\textup{VR}}$ as in \eqref{eq_SAA}, it can be proven that, for some $C>0$,
$$\textstyle{\EEk{\normsq{\epsilon_k}}\leq\frac{C\sigma^2}{N_k},}$$
i.e., the error diminishes as the batch size increases. Such result is, therefore, called variance reduction. More details can be found in \cite[Lemma 3.12]{iusem2017} \cite[Lemma 6]{franci2020tac}.
\fineass\end{remark}

In addition to Assumption \ref{ass_mono}, we postulate that the pseudogradient mapping is Lipschitz continuous.
\begin{assumption}\label{ass_lip}
$\FF$ as in \eqref{eq_grad} is $\ell$-Lipschitz continuous for $\ell>0$, i.e., $\norm{\FF(\bs x)-\FF(\bs y)}\leq\ell\norm{\bs x-\bs y}$ for all $\bs x,\bs y\in \bs\Omega$.
\fineass\end{assumption}

Using the variance reduced scheme in \eqref{eq_SAA}, we can take a constant step size, as long as it is small enough while the relaxation parameter should not be too small.
\begin{assumption}\label{ass_delta}
The step size in Algorithm \ref{algo_i} is such that $\lambda\in(0, \frac{1}{2\delta(2\ell+1)}]$
where $\ell$ is the Lipschitz constant of $\FF$ in \eqref{eq_grad} as in Assumption \ref{ass_lip}.
%\fineass\end{assumption}
%On the other hand, the relaxation parameter should not be too small.
%\begin{assumption}\label{ass_delta}
The relaxation parameter in Algorithm \ref{algo_i} is such that $\delta\in[\frac{2}{1+\sqrt{5}},1]$. 
\fineass\end{assumption}

We can finally state our first convergence result.
\begin{theorem}\label{theo_SAA}
Let Assumptions \ref{ass_J}--\ref{ass_delta} hold. Then, the sequence $(\bs x^k)_{k\in\NN}$ generated by Algorithm \ref{algo_i} with $F^{\textup{VR}}$ as in \eqref{eq_SAA} converges a.s. to a SNE of the game in \eqref{eq_game}.  
\end{theorem}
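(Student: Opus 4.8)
The plan is to run a quasi-Fej\'er / Lyapunov analysis in the spirit of the deterministic golden ratio algorithm of \cite{malitsky2019}, and then to close the stochastic argument with the Robbins--Siegmund supermartingale convergence theorem, using the increasing batch size to make the noise summable. Throughout, fix a solution $\bs x^*\in\op{SOL}(\bs\Omega,\FF)$, which exists and belongs to a compact solution set by Remark~\ref{remark_vsne}, and write the approximated mapping as $F^{\textup{VR}}(\bs x^k,\xi^k)=\FF(\bs x^k)+\epsilon_k$ with $\epsilon_k$ as in \eqref{eq_error}.

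First I would extract the one-step inequality from the projection. Since $\bs x^{k+1}=\op{proj}_{\bs\Omega}[\bar{\bs x}^k-\lambda F^{\textup{VR}}(\bs x^k,\xi^k)]$, the variational characterization of the projection gives, for every $\bs x\in\bs\Omega$,
\begin{equation*}
\langle \bar{\bs x}^k-\lambda F^{\textup{VR}}(\bs x^k,\xi^k)-\bs x^{k+1},\,\bs x-\bs x^{k+1}\rangle\leq 0 .
\end{equation*}
Setting $\bs x=\bs x^*$ and expanding via $2\langle a,b\rangle=\norm{a}^2+\norm{b}^2-\norm{a-b}^2$, I would obtain a relation between $\norm{\bs x^{k+1}-\bs x^*}^2$, $\norm{\bar{\bs x}^k-\bs x^*}^2$ and $\norm{\bs x^{k+1}-\bar{\bs x}^k}^2$ with a residual term $\lambda\langle F^{\textup{VR}}(\bs x^k,\xi^k),\bs x^{k+1}-\bs x^*\rangle$. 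I would split this as $\langle \FF(\bs x^k),\bs x^k-\bs x^*\rangle+\langle \FF(\bs x^k),\bs x^{k+1}-\bs x^k\rangle+\langle\epsilon_k,\bs x^{k+1}-\bs x^*\rangle$: the first summand is nonnegative because $\langle\FF(\bs x^*),\bs x^k-\bs x^*\rangle\geq0$ (as $\bs x^*$ solves \eqref{eq_SVI}) together with monotonicity (Assumption~\ref{ass_mono}), so it is discarded to our advantage, while the middle summand is bounded with Lipschitz continuity (Assumption~\ref{ass_lip}) and Young's inequality against multiples of $\norm{\bs x^{k+1}-\bs x^k}^2$ and $\norm{\bs x^k-\bs x^{k-1}}^2$.

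The crucial bookkeeping step is to absorb the relaxation \eqref{eq_relax}. Writing $\bar{\bs x}^k-\bs x^*=(1-\delta)(\bs x^k-\bs x^*)+\delta(\bar{\bs x}^{k-1}-\bs x^*)$ and expanding $\norm{\bar{\bs x}^k-\bs x^*}^2$, I would assemble an energy function of the form
\begin{equation*}
V_k=\norm{\bs x^k-\bs x^*}^2+c\,\norm{\bs x^k-\bs x^{k-1}}^2
\end{equation*}
for a suitable $c>0$ and show that the deterministic part of the recursion reads $V_{k+1}\leq V_k-\eta\norm{\bs x^{k+1}-\bs x^k}^2+(\text{noise})$ for some $\eta>0$. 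This is where the parameter hypotheses enter decisively: the golden-ratio lower bound $\delta\geq\tfrac{2}{1+\sqrt5}$ and the step-size bound $\lambda\leq\tfrac{1}{2\delta(2\ell+1)}$ of Assumption~\ref{ass_delta} are exactly what makes both $c$ and $\eta$ nonnegative. I expect this coefficient matching to be the main obstacle of the proof, namely securing a genuine telescoping/quasi-Fej\'er structure rather than a merely bounded one.

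Finally I would control the stochastic terms and pass to the limit. Taking $\EEk{\cdot}$, the cross term $\EEk{\langle\epsilon_k,\bs x^{k+1}-\bs x^*\rangle}$ is handled by decomposing $\bs x^{k+1}-\bs x^*=(\bs x^k-\bs x^*)+(\bs x^{k+1}-\bs x^k)$: the $\mc F_k$-measurable part vanishes since $\EEk{\epsilon_k}=0$ (Assumption~\ref{ass_error}), and the remainder is bounded by Young against $\normsq{\epsilon_k}$, whose conditional mean is $\leq C\sigma^2/N_k$ by Remark~\ref{remark_error}; Assumption~\ref{ass_batch} then yields $\sum_k \EEk{\normsq{\epsilon_k}}\leq C\sigma^2\sum_k b^{-1}(k+k_0)^{-(a+1)}<\infty$ because $a>0$. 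The recursion thus takes the Robbins--Siegmund form $\EEk{V_{k+1}}\leq V_k-\eta\norm{\bs x^{k+1}-\bs x^k}^2+\zeta_k$ with $\sum_k\zeta_k<\infty$ a.s., so $V_k$ converges a.s. and $\sum_k\norm{\bs x^{k+1}-\bs x^k}^2<\infty$, whence $\norm{\bs x^{k+1}-\bs x^k}\to0$ a.s. By compactness of $\bs\Omega$ (Assumption~\ref{ass_J}) some subsequence $\bs x^{k_m}\to\bar{\bs x}$; using $\norm{\bs x^{k+1}-\bs x^k}\to0$, $\epsilon_k\to0$, continuity of $\FF$ and the projection inequality, I would show $\bar{\bs x}\in\op{SOL}(\bs\Omega,\FF)$. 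Since $V_k$ converges for the choice $\bs x^*=\bar{\bs x}$ and $\norm{\bs x^{k_m}-\bar{\bs x}}\to0$ along the subsequence, the limit of $\norm{\bs x^k-\bar{\bs x}}^2$ is zero, i.e.\ the whole sequence converges a.s.\ to the v-SNE $\bar{\bs x}$, which by Remark~\ref{remark_vsne} is a SNE of \eqref{eq_game}.
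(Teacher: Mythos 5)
Your plan reproduces much of the paper's scaffolding (projection inequality, discarding $\langle\FF(\bs x^k),\bs x^k-\bs x^*\rangle$ by monotonicity, relaxation bookkeeping, Robbins--Siegmund with summable noise from the increasing batch size, cluster-point identification), and your final whole-sequence argument is even spelled out more completely than the paper's. But there is a genuine gap exactly at the step you call the ``main obstacle''. You invoke the variational characterization of the projection only \emph{once} (for $\bs x^{k+1}$, tested at $\bs x^*$), and then claim the leftover term $\lambda\langle\FF(\bs x^k),\bs x^{k+1}-\bs x^k\rangle$ can be ``bounded with Lipschitz continuity and Young's inequality against multiples of $\norm{\bs x^{k+1}-\bs x^k}^2$ and $\norm{\bs x^k-\bs x^{k-1}}^2$.'' It cannot: Assumption~\ref{ass_lip} controls \emph{differences} $\norm{\FF(\bs x)-\FF(\bs y)}$, whereas this term involves the operator value $\FF(\bs x^k)$ itself, which does not vanish along the iterates --- for a constrained VI one only has $\langle\FF(\bs x^*),\bs x-\bs x^*\rangle\geq 0$, not $\FF(\bs x^*)=0$. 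The only available bound is $\norm{\FF(\bs x^k)}\leq M$ by compactness of $\bs\Omega$ and continuity, and Young's inequality then leaves a persistent per-iteration error of order $\lambda^2M^2$ that is \emph{not} summable. Robbins--Siegmund then fails to apply (its $\eta_k$ diverges), and the best one can extract is convergence to a $O(\lambda)$-neighborhood --- i.e., the conclusion of Theorem~\ref{theo_ave} (note the analogous $2B^2\lambda$ term there, coming from precisely this operator-magnitude bound), not the exact a.s.\ convergence claimed in Theorem~\ref{theo_SAA}. This is, in essence, why the plain one-evaluation forward--backward scheme is not convergent for merely monotone Lipschitz mappings.

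The paper closes this hole with the two-inequality trick inherited from \cite{malitsky2019}, which is the real heart of the SRFB analysis: it writes the projection characterization a \emph{second} time, for the update that produced $\bs x^k$ (from $\bar{\bs x}^{k-1}$ and $F^{\textup{VR}}(\bs x^{k-1},\xi^{k-1})$), tested at the point $\bs x^{k+1}$, and rewrites $\bs x^k-\bar{\bs x}^{k-1}=\tfrac{1}{\delta}(\bs x^k-\bar{\bs x}^k)$ via Lemma~\ref{lemma_algo}.1. Adding the two inequalities makes the single-point operator terms recombine into the difference $2\lambda\langle\FF(\bs x^k)-\FF(\bs x^{k-1}),\bs x^k-\bs x^{k+1}\rangle+2\lambda\langle\epsilon^k-\epsilon^{k-1},\bs x^k-\bs x^{k+1}\rangle$, and only then do Lipschitz continuity and Young's inequality yield the absorbable quantities $\normsq{\bs x^k-\bs x^{k-1}}$ and $\normsq{\bs x^k-\bs x^{k+1}}$ plus noise terms that are summable under Assumption~\ref{ass_batch} and Remark~\ref{remark_error}. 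The price of this trick is the appearance of $\tfrac1\delta\normsq{\bs x^k-\bar{\bs x}^k}$-type terms, and it is in matching \emph{those} coefficients that the golden-ratio bound $\delta\geq\tfrac{2}{1+\sqrt5}$ and the step bound of Assumption~\ref{ass_delta} are consumed --- not in your single-inequality energy estimate. If you insert this second projection inequality before assembling your Lyapunov function (the paper's choice is $\alpha_k=\tfrac{1}{1-\delta}\normsq{\bar{\bs x}^k-\bs x^*}+\ell\lambda\normsq{\bs x^k-\bs x^{k-1}}$, stated in terms of $\bar{\bs x}^k$ rather than $\bs x^k$), the remainder of your plan --- Robbins--Siegmund, boundedness, $\norm{\bs x^k-\bar{\bs x}^k}\to0$, cluster point, and identification of the limit (the paper does this by tracking the residual $\op{res}(\bs x^k)$ and showing it is summable, hence vanishing) --- goes through essentially as you describe.
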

\begin{proof}
See Appendix \ref{app_theo_SAA}.
\end{proof}

%\begin{algorithm}[t]
%\caption{Extragradient (EG)}\label{EG}
%Initialization: $x_i^0 \in \Omega_i$\\
%Iteration $k$: Agent $i$ \\
%Receives $x_j^k$ for $j \neq i$, then updates:
%$$
%y_i^{k} =\op{proj}_{\Omega_i}[x^{k}-\alpha_{k}\hat F_{i}(x_i^k, x_j^k,\xi_i^k) ] 
%$$\\
%Receives $y_j^k$ for $j \neq i$, then updates:
%$$
%x_i^{k+1} =\op{proj}_{\Omega_i}[x^{k}-\alpha_{k} \hat F_{i}(y_i^k, y_j^k,\xi_i^k)]
%$$
%\end{algorithm}
%\begin{algorithm}[t]
%\caption{Adam}\label{adam}
%    \SetKwInOut{Input}{Input}
%    \SetKwInOut{Output}{Output}
%
%    \Input{Initial parameters $x^0,\bar x^0 \in \bs\Omega$\\
%    Exponential decay rates $\beta_1,\beta_2\in[0,1)$\\
%    Step size $\alpha$}
%    \Output{Parameters $x^{k+1}$}
%    \Init{}{$1^{\small{\text{st}}}$ moment vector $z^0=0$\\
%    $2^{\small{\text{nd}}}$ moment vector $y^0=0$\\
%    Time step k=0}
%    
%    \For{$k=1,\dots,K$}
%    {
%$\hat g^k=\hat F(x^k,\xi^k)$\quad \# update pseudogradient\\
%$z^k=\beta_1z^{k-1}+(1-\beta_1)\hat g^k$\quad \# update $1^{\small{\text{st}}}$ moment\\
%$y^k=\beta_2y^{k-1}+(1-\beta_2)(\hat g^k)^2$ \# update $2^{\small{\text{nd}}}$ moment\\
%$\tilde z^k=\frac{z^k}{1-\beta_1^k}$ \quad \# compute $1^{\small{\text{st}}}$ moment estimate\\
%$\tilde y^k=\frac{y^k}{1-\beta_2^k}$\quad \# compute $2^{\small{\text{nd}}}$ moment estimate\\
%$x^{k+1}=x^k-\alpha\frac{\tilde z^k}{\sqrt{\tilde y^k}+\epsilon}$\quad \# update parameters\\
%}
%\Return{$x^{K+1}$}
%\end{algorithm}
\subsection{Convergence of Algorithm \ref{algo_i2}}

In this section, we state the convergence result (and the required assumptions) for Algorithm \ref{algo_i2}.

First, the bound on the relaxation parameter is wider in this case (compared to Assumption \ref{ass_delta}).

\begin{assumption}\label{ass_delta1}
The relaxation parameter in Algorithm \ref{algo_i2} is such that $\delta\in[0,1]$.\fineass
\end{assumption}

Next, we postulate an assumption on the SA approximation in \eqref{eq_SA}, reasonable in our game theoretic framework \cite{gidel2019}. We also assume an explicit bound on the feasible set.

\begin{assumption}\label{ass_bounded}
$F^{\textup{SA}}$ in \eqref{eq_SA} is bounded, i.e., there exists $B>0$ such that for $\bs x\in \bs\Omega$, 
$\EE[\|F^{\textup{SA}}(\bs x,\xi)\|^2|\mc F_k]\leq B.$
\fineass\end{assumption}
 
\begin{assumption}\label{ass_boundedset}
The local constraint set $\bs\Omega$ is such that $\max\limits_{\bs x,\bs y\in \bs\Omega} \|\bs x-\bs y\|^{2} \leq R^{2}$, for some $R\geq0$.
\fineass\end{assumption}

To measure how close a point is to the solution, let us introduce the gap function,
\begin{equation}\label{gap}
\operatorname{err}(\bs x)=\max _{\bs x^* \in\bs \Omega} \langle\FF(\bs x^*),\bs x-\bs x^*\rangle,
\end{equation}
which is equal 0 if and only if $\bs x$ is a solution of the (S)VI in \eqref{eq_SVI} \cite[Eq. 1.5.2]{facchinei2007}. Other possible measure functions can be found in \cite{gidel2019}.

%EG %%%%%%%%%%%%%%%%%%%%%%%%%%%%%%
\begin{algorithm}[b]
\caption{Extragradient (EG)}\label{EG}
Initialization: $x_i^0 \in \Omega_i$\\
Iteration $k$: Agent $i$ \\
Receives $x_j^k$ for $j \neq i$, then updates:
$$
y_i^{k} =\op{proj}_{\Omega_i}[x^{k}-\alpha_{k}\hat F_{i}(x_i^k, x_j^k,\xi_i^k) ] 
$$
Receives $y_j^k$ for $j \neq i$, then updates:
$$
x_i^{k+1} =\op{proj}_{\Omega_i}[x^{k}-\alpha_{k} \hat F_{i}(y_i^k, y_j^k,\xi_i^k)]
$$
\end{algorithm}
% ADAM %%%%%%%%%%%%%%%%%%%%%%
\begin{algorithm}[t]
\caption{Adam}\label{adam}
    \SetKwInOut{Input}{Input}
    \SetKwInOut{Output}{Output}

    \Input{Initial parameters $x^0,\bar x^0 \in \bs\Omega$\\
    Exponential decay rates $\beta_1,\beta_2\in[0,1)$\\
    Step size $\alpha$}
    \Output{Parameters $x^{k+1}$}
    \Init{}{$1^{\small{\text{st}}}$ moment vector $z^0=0$\\
    $2^{\small{\text{nd}}}$ moment vector $y^0=0$\\
    Time step k=0}
    
    \For{$k=1,\dots,K$}
    {
$\hat g^k=\hat F(x^k,\xi^k)$\quad \# update pseudogradient\\
$z^k=\beta_1z^{k-1}+(1-\beta_1)\hat g^k$\quad \# update $1^{\small{\text{st}}}$ moment\\
$y^k=\beta_2y^{k-1}+(1-\beta_2)(\hat g^k)^2$ \# update $2^{\small{\text{nd}}}$ moment\\
$\tilde z^k=\frac{z^k}{1-\beta_1^k}$ \quad \# compute $1^{\small{\text{st}}}$ moment estimate\\
$\tilde y^k=\frac{y^k}{1-\beta_2^k}$\quad \# compute $2^{\small{\text{nd}}}$ moment estimate\\
$x^{k+1}=x^k-\alpha\frac{\tilde z^k}{\sqrt{\tilde y^k}+\epsilon}$\quad \# update parameters\\
}
\Return{$x^{K+1}$}
\end{algorithm}

We are now ready to state our second convergence result.
\begin{theorem}\label{theo_ave}
Let Assumptions \ref{ass_J}--\ref{ass_error} and \ref{ass_delta1}--\ref{ass_boundedset} hold. Let $\bs X^K=\frac{1}{K}\sum\nolimits_{k=1}^K\bs x^k$, $c=\frac{2-\delta^2}{1-\delta}$, $B$ be as in Assumption \ref{ass_bounded}, $R$ be as in Assumption \ref{ass_boundedset} and $\sigma^2$ be as in Assumption \ref{ass_error}.
Then, the sequence $(\bs x^k)_{k\in\NN}$ generated by Algorithm \ref{algo_i2} with constant step size and $F^{\textup{SA}}$ as in \eqref{eq_SA} is such that
$$\EE[\op{err}(\bs X^K)]=\frac{cR}{\lambda K}+(2B^2+\sigma^2)\lambda.$$
Thus,
$\lim_{K\to\infty}\EE[\op{err}(\bs X^K)]=(2B^2+\sigma^2)\lambda.$
\end{theorem}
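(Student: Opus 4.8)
The plan is to establish a one-step energy inequality for the iterates and then telescope it against the averaged point via the gap function. First I would introduce the standard ``relaxed'' potential $V_k := \normsq{\bs x^k - \bs x^*} + \tfrac{\delta}{1-\delta}\normsq{\bs x^k - \bar{\bs x}^{k-1}}$ (or a comparable combination yielding the constant $c=\tfrac{2-\delta^2}{1-\delta}$), fix an arbitrary $\bs x^* \in \bs\Omega$, and expand $\normsq{\bs x^{k+1}-\bs x^*}$ using the update $\bs x^{k+1}=\proj_{\bs\Omega}[\bar{\bs x}^k - \lambda F^{\textup{SA}}(\bs x^k,\xi^k)]$. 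The projection is firmly nonexpansive, so the variational characterization $\langle \bs x^{k+1}-\bar{\bs x}^k + \lambda \hat F(\bs x^k,\xi^k),\, \bs x^* - \bs x^{k+1}\rangle \geq 0$ gives, after inserting \eqref{eq_relax2} to rewrite $\bar{\bs x}^k$ in terms of $\bs x^k$ and $\bar{\bs x}^{k-1}$, a bound of the form
\begin{equation*}
\normsq{\bs x^{k+1}-\bs x^*} \leq V_k - V_{k+1}' + 2\lambda\langle \hat F(\bs x^k,\xi^k),\, \bs x^* - \bs x^{k+1}\rangle + (\text{cross terms}),
\end{equation*}
where $V_{k+1}'$ absorbs the recursive relaxation term.

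Next I would handle the inner product by splitting $\hat F = \FF + \epsilon_k$ per \eqref{eq_error} and rearranging toward the gap function: writing $\langle \FF(\bs x^k), \bs x^k - \bs x^*\rangle$ to line up with $\op{err}$, using monotonicity (Assumption \ref{ass_mono}) to replace $\FF(\bs x^k)$ by $\FF(\bs x^*)$ at the cost of a nonnegative term, and bounding the residual $\langle\FF(\bs x^k), \bs x^k - \bs x^{k+1}\rangle$ together with the quadratic terms using Young's inequality and the boundedness Assumption \ref{ass_bounded}. Taking conditional expectation $\EE[\cdot|\mc F_k]$ kills the linear error term since $\EEk{\epsilon_k}=0$ (Assumption \ref{ass_error}), while $\EEk{\normsq{\epsilon_k}}\leq\sigma^2$ and $\EEk{\normsq{\hat F}}\leq B$ control the remaining second-order contributions, producing the coefficient $(2B^2+\sigma^2)\lambda$ after summation.

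Then I would sum the resulting inequality over $k=1,\dots,K$. The telescoping of the potential $V_k$ collapses to $V_1 - V_{K+1} \leq V_1$, which I would bound by $cR$ using Assumption \ref{ass_boundedset} (the diameter bound $\normsq{\bs x - \bs y}\leq R^2$) so that the initialization and all relaxation gaps are dominated by $cR$. Dividing by $\lambda K$ and applying convexity of $\op{err}(\cdot)$ in its argument (it is a maximum of affine functions of $\bs x$, hence convex) via Jensen's inequality to pass from $\tfrac{1}{K}\sum_k \op{err}(\bs x^k)$ down to $\op{err}(\bs X^K)$ with $\bs X^K = \tfrac{1}{K}\sum_k \bs x^k$ yields the claimed bound $\EE[\op{err}(\bs X^K)] = \tfrac{cR}{\lambda K} + (2B^2+\sigma^2)\lambda$. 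Taking $K\to\infty$ gives the limiting neighborhood value.

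The main obstacle I expect is the bookkeeping in the one-step inequality: correctly grouping the relaxation cross-terms from \eqref{eq_relax2} so that they assemble into a clean telescoping potential with exactly the constant $c=\tfrac{2-\delta^2}{1-\delta}$, rather than leaving leftover sign-indefinite terms that would block the summation. Getting the Young's-inequality weights right so that the quadratic $\FF$ and error terms collapse precisely into the $(2B^2+\sigma^2)\lambda$ coefficient (and not something step-size dependent that fails to vanish) is the delicate accounting step; everything else is a routine application of firm nonexpansiveness, monotonicity, and the zero-mean/bounded-variance error properties.
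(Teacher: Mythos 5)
Your outline reproduces the deterministic bookkeeping of the paper's proof (projection inequality, relaxation potential via Lemma \ref{lemma_algo}, Young's inequality with Assumption \ref{ass_bounded}, monotonicity to pass to $\FF(\bs x^*)$, telescoping against the diameter bound $R$), but it has a genuine gap at the stochastic step, and it is exactly the step you dismiss as routine. You propose to kill the linear noise term $\langle\epsilon^k,\bs x^k-\bs x^*\rangle$ by taking conditional expectation, since $\EE[\epsilon^k|\mc F_k]=0$. That works only when $\bs x^*$ is a \emph{fixed} point of $\bs\Omega$. But the quantity to be bounded is $\EE[\op{err}(\bs X^K)]=\EE\bigl[\max_{\bs x^*\in\bs\Omega}\langle\FF(\bs x^*),\bs X^K-\bs x^*\rangle\bigr]$: the maximizing $\bs x^*$ is a random variable, correlated with the whole noise sequence through $\bs X^K$, so $\EE[\langle\epsilon^k,\bs x^*\rangle]\neq 0$ in general. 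Expectation and maximization do not commute; your argument as written only yields $\max_{\bs x^*}\EE[\langle\FF(\bs x^*),\bs X^K-\bs x^*\rangle]\leq \frac{cR}{\lambda K}+(2B^2+\sigma^2)\lambda$, which is strictly weaker than the claim. (Your fallback via Jensen does not repair this: bounding $\frac{1}{K}\sum_k\EE[\op{err}(\bs x^k)]$ faces the same max--expectation exchange and, worse, individual iterates need not have small gap under mere monotonicity --- that is the whole reason for averaging.)

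The paper closes this hole with an auxiliary (noise-driven) sequence: $\bs u^0=\bs x^0$, $\bs u^{k+1}=\op{proj}(\bs u^k-\lambda_k\epsilon^k)$, and the splitting $\langle\epsilon^k,\bs x^k-\bs x^*\rangle=\langle\epsilon^k,\bs x^k-\bs u^k\rangle+\langle\epsilon^k,\bs u^k-\bs x^*\rangle$. The second term is telescoped \emph{pathwise} via $2\lambda_k\langle\epsilon^k,\bs u^k-\bs x^*\rangle\leq\normsq{\bs u^k-\bs x^*}-\normsq{\bs u^{k+1}-\bs x^*}+\lambda_k^2\normsq{\epsilon^k}$, so the $\bs x^*$-dependence is absorbed into initial-condition terms bounded uniformly by $R$ (Assumption \ref{ass_boundedset}); the first term involves only $\mc F_k$-measurable quantities, so its expectation genuinely vanishes. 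This makes the right-hand side independent of $\bs x^*$ \emph{before} taking the maximum, after which taking expectations is legitimate and produces the $(2B^2+\sigma^2)\lambda$ coefficient. Without this device (or an equivalent decoupling argument), the proof does not establish the stated bound on $\EE[\op{err}(\bs X^K)]$.
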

\begin{proof}
See Appendix \ref{app_theo_ave}.
\end{proof}

\begin{remark}
The average defined in Theorem \ref{theo_ave} is not in conflict with the definition in \eqref{eq_ave} because if we consider a fixed step size, it holds that
$$\textstyle{\bs X^{K} = \frac{\sum_{k=1}^{K} \lambda_{k} \bs x^k}{\sum_{k=1}^{K} \lambda_{k}}=\frac{\lambda\sum_{k=1}^{K} \bs x^k}{K\lambda}=\frac{1}{K}\sum\nolimits_{k=1}^K\bs x^k}.\vspace{-.5cm}$$
\fineass\end{remark}

\section{Numerical simulations}

% RADAM %%%%%%%%%%%%%%%%%%%%%%%%%
\begin{algorithm}[t]
    \SetKwInOut{Input}{Input}
    \SetKwInOut{Output}{Output}

    \Input{Initial parameters $x^0,\bar x^0 \in \bs\Omega$\\
    Exponential decay rates $\beta_1,\beta_2\in[0,1)$\\
    Step size $\alpha$\\
    Relaxing parameter $\delta\in\left[\frac{2}{1+\sqrt{5}},1\right]$}
    \Output{Parameters $x^{k+1}$}
    \Init{}{$1^{\small{\text{st}}}$ moment vector $z^0=0$\\
    $2^{\small{\text{nd}}}$ moment vector $y^0=0$\\
    Time step k=0}
    
\For{$k=1,\dots,K$}
{
$\hat g^k=\hat F(x^k,\xi^k)$\quad \# update pseudogradient\\
$\bar{x}^{k} =(1-\delta) x^k+\delta\bar{x}^{k-1}$\quad \# relaxation step\\
$z^k=\beta_1z^{k-1}+(1-\beta_1)\hat g^k$\quad \# update $1^{\small{\text{st}}}$ moment\\
$y^k=\beta_2y^{k-1}+(1-\beta_2)(\hat g^k)^2$ \# update $2^{\small{\text{nd}}}$ moment\\
$\tilde z^k=\frac{z^k}{1-\beta_1^k}$ \quad \# compute $1^{\small{\text{st}}}$ moment estimate\\
$\tilde y^k=\frac{y^k}{1-\beta_2^k}$\quad \# compute $2^{\small{\text{nd}}}$ moment estimate\\
$x^{k+1}=\bar x^k-\alpha\frac{\tilde z^k}{\sqrt{\tilde y^k}+\epsilon}$\quad \# update parameters\\
}
\Return{$x^{K}$}
    \caption{Relaxed Adam}\label{RA}
\end{algorithm}

Let us present some numerical experiments to validate the analysis. We show how GANs are trained using our SRFB algorithm and we propose a comparison with one of the most used algorithms for GANs. 
Specifically, we compare our SRFB algorithm with the extragradient (EG) algorithm (Algorithm \ref{EG}) \cite{gidel2019}. We note that, compared to Algorithm \ref{algo_i}, Algorithm \ref{EG} involves two projection steps and two evaluations of the pseudogradient mapping. For the simulations we use Adam (Algorithm \ref{adam}) \cite{kingma2014} instead of the stochastic gradient \cite{robbins1951}. In Algorithm \ref{RA} we propose the Relaxed Adam, i.e., the SRFB algorithm with Adam; the EG algorithm with Adam can be derived similarly \cite[Algorithm 4]{gidel2019}.
All the simulations are performed on Matlab R2020a with 128G RAM and 2 * Intel(R) Xeon(R) Gold 6148 CPU @ 2.40GHz (20 cores each).

We train two DCGAN architectures \cite{wang2019,radford2015} (presented in Table \ref{tab_gan}) on the CIFAR10 dataset \cite{krizhevsky2009} with the GAN objective \cite{goodfellow2016,goodfellow2014}. We choose the hyperparameters of Adam as $\beta_1= 0.5$ and $\beta_2= 0.9$. We compute the inception score \cite{salimans2016} to have an objective comparison: the higher the inception score, the better the image generation. In Figure \ref{fig_inc_iter}, we show how the inception score increases with time; the solid lines represent a tracking average over the previous and following 50 values of the inceptions score, which is averaged over 20 runs. The transparent area indicated the maximum and minimum values obtained in the 20 runs.

We note that the SRFB algorithm is computationally less demanding than the EG algorithm. Specifically, in Figure \ref{fig_inc_iter}, after 24 hours (86400 seconds), the SRFB has performed approximatively 130000 iterations while the EG 90000. The averaged aSRFB shows worse performances (after approximatively 110000 iterations), but this is to be expected since we have convergence only to a neighborhood of the solution (Theorem \ref{theo_ave}). In Figure \ref{fig_snr}, we show the mean to variance ratio (average Inception Score divided by its variance) at each time instant of the three algorithms. As one can see, from Figure \ref{fig_inc_iter} and \ref{fig_snr} the SRFB algorithm has a similar performance to the EG algorithm but with a smaller variance, hence a higher reproducibility.

\begin{figure}[t]
%\begin{subfigure}{\columnwidth}
\centering
\includegraphics[width=\columnwidth]{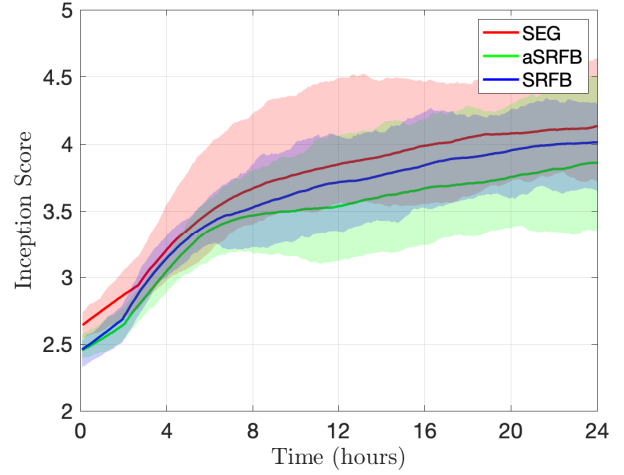}
\caption{Inception scores reached by the EG, the SRFB and the aSRFB  algorithms.}\label{fig_inc_iter}
%\end{subfigure}
\end{figure}

\begin{figure}[t]
%\begin{subfigure}{\columnwidth}
\centering
\includegraphics[width=\columnwidth]{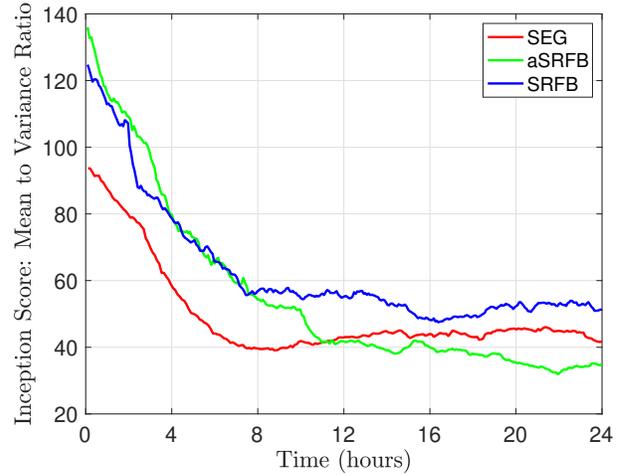}
\caption{Mean to variance ratio corresponding to the average inception scores.}\label{fig_snr}
%\end{subfigure}
\end{figure}

\section{Conclusion}
The stochastic relaxed forward-backward algorithm is a very promising algorithm for training Generative Adversarial Networks. If an increasing number of samples is available and the pseudogradient mapping of the game is monotone, convergence to the exact solution holds. Instead, with only a finite, fixed mini-batch and the same monotonicity assumption, convergence to a neighborhood of the solution can be proven by using an averaging technique. Our numerical experience shows a similar performance compared to the extragradient scheme, widely used in the literature for GANs.

For the future, it would be interesting to extend the convergence result to an exact solution also in the case of a small mini-batch. Since the cost function associated to GAN is often non-convex, it would also worth finding algorithms converging under weaker assumptions than monotonicity.

%\begin{figure}[h]
%\begin{subfigure}{\columnwidth}
%\centering
%\includegraphics[width=.7\textwidth]{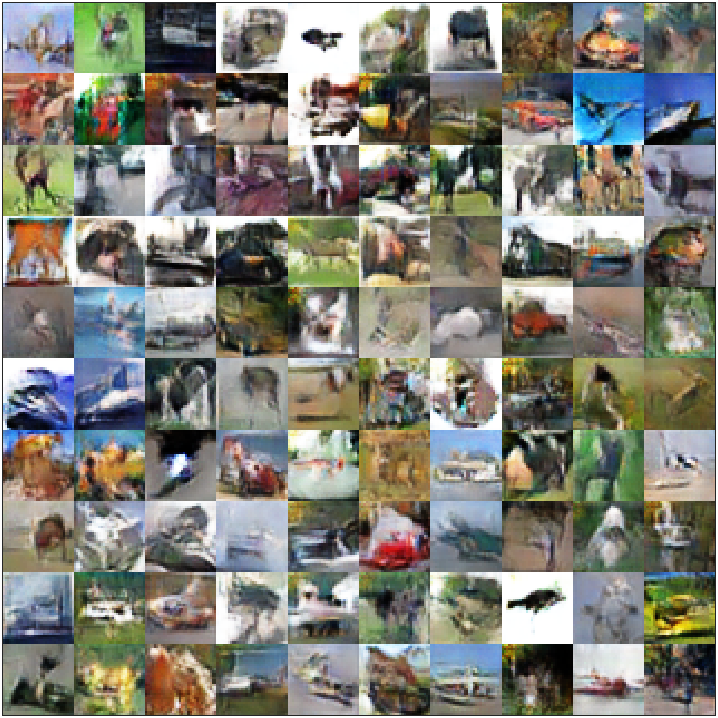}
%\caption{}
%\end{subfigure}
%\begin{subfigure}{\columnwidth}
%\centering
%\includegraphics[width=.7\textwidth]{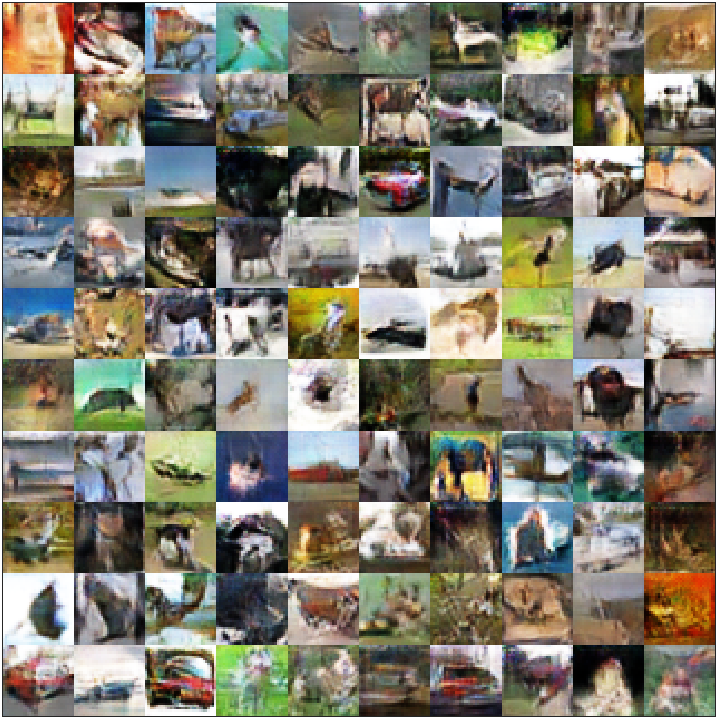}
%\caption{}
%\end{subfigure}
%%\end{figure}
%%\begin{figure}[h]
%\begin{subfigure}{\columnwidth}
%\centering
%\includegraphics[width=.7\textwidth]{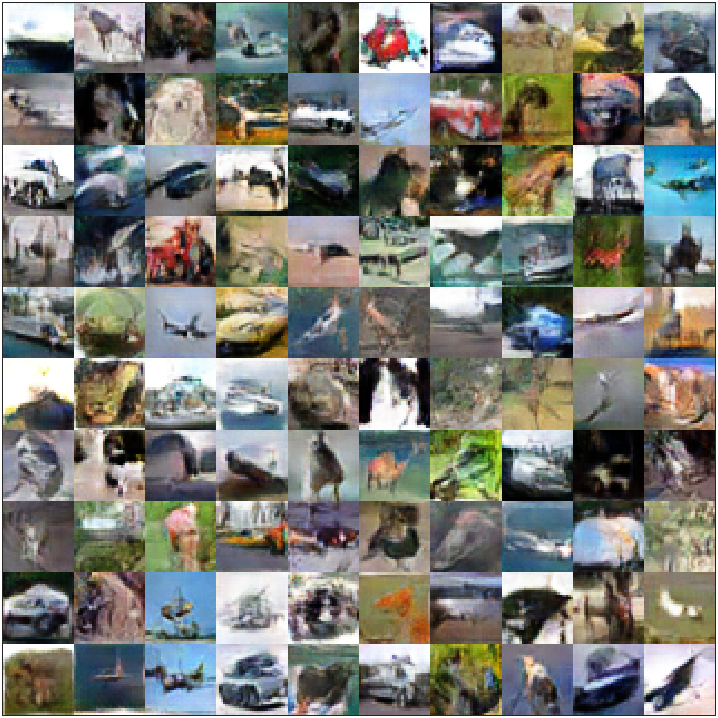}
%\caption{}
%\end{subfigure}
%\caption{Generated images with the SRFB algorithm (a),\tcb{ with the aSFRB algorithm (b)} and with the EG algorithm (c).}\label{fig_images}
%\end{figure}

\appendices
\section{Preliminary results}
% NN %%%%%%%%%%%%%%%%%%%%%%%%%%%
\begin{table}[t]
\begin{center}
\begin{tabular}{c}
\hline
Generator\\
\hline
Input $z\in\RR^{1\times 1\times100}$\\
Linear $100 \to 5\times 5 \times 512$\\
Transp. Conv (ker: $5\times 5$, $512 \to 256$, stride: 1, crop: 0)\\
Batch Normalization\\
ReLu\\
Transp. Conv (ker: $5\times 5$, $256 \to 128$, stride: 2, crop: 1)\\
Batch Normalization\\
ReLu\\
Transp. Conv (ker: $5\times 5$, $128 \to 3$, stride: 2, crop: 1)\\
Tanh($\cdot$)\\
\hline\\
%\end{tabular}\\
%    \vspace{.2cm}
%\begin{tabular}{p{\columnwidth}}
\hline
Discriminator\\
\hline
Input $z\in\RR^{64\times 64\times 3}$\\
Conv (ker: $5\times 5$, $3 \to 64$, stride: 2, pad: 1)\\
%Batch Normalization\\
LeakyReLu\\
Conv (ker: $5\times 5$, $64 \to 256$, stride: 2, pad: 1)\\
Batch Normalization\\
LeakyReLu\\
Conv (ker: $5\times 5$, $256 \to 512$, stride: 2, pad: 1)\\
Batch Normalization\\
LeakyReLu\\
Conv (ker: $5\times 5$, $512 \to 1$)\\
\hline
\end{tabular}
\caption{Neural networks used.}\label{tab_gan}
\end{center}
\end{table}

%\begin{figure}[t]
%\begin{subfigure}{\columnwidth}
%\centering
%\includegraphics[width=.7\textwidth]{Figs/RFB_300_3.png}
%\caption{}
%\end{subfigure}
%\begin{subfigure}{\columnwidth}
%\centering
%\includegraphics[width=.7\textwidth]{Figs/Avg_300.png}
%\caption{}
%\end{subfigure}
%%\end{figure}
%%\begin{figure}[h]
%\begin{subfigure}{\columnwidth}
%\centering
%\includegraphics[width=.7\textwidth]{Figs/EG_300_3.png}
%\caption{}
%\end{subfigure}
%\caption{Generated images with the SRFB algorithm (a),\tcb{ with the aSFRB algorithm (b)} and with the EG algorithm (c).}\label{fig_images}
%\end{figure}

We here recall some facts about norms, some properties of the projection operator and a preliminary result. Some results find inspiration from \cite{malitsky2019} where the algorithm is presented in the deterministic case.
We start with the norms. 
We use the cosine rule 
\begin{equation}\label{cosine}
\textstyle{\langle x,y\rangle=\frac{1}{2}(\langle x,x\rangle+\langle y,y\rangle-\norm{x-y}^2)}
\end{equation}
and the following two property of the norm \cite[Corollary 2.15]{bau2011}, $\forall a, b \in \mathcal{E}$, $\forall \alpha \in \mathbb{R}$
\begin{equation}\label{norm_conv}
\|\alpha a+(1-\alpha) b\|^{2}=\alpha\|a\|^{2}+(1-\alpha)\|b\|^{2}-\alpha(1-\alpha)\|a-b\|^{2},
\end{equation}
\begin{equation}\label{norm_sum}
\|a+b\|^{2}\leq2\|a\|^{2}+2\|b\|^{2}.
\end{equation}

Concerning the projection operator, by \cite[Proposition 12.26]{bau2011}, it satisfies the following inequality: let $C$ be a nonempty closed convex set, then, for all $x,y\in C$
\begin{equation}\label{proj}
\bar x=\op{proj}_C(x)\Leftrightarrow\langle \bar x-x, y-\bar x\rangle \geq 0 .
\end{equation}
The projection is also firmly non expansive \cite[Prop. 4.16]{bau2011}, and consequently, quasi firmly non expansive \cite[Def. 4.1]{bau2011}.

The Robbins-Siegmund Lemma is widely used in literature to prove a.s. convergence of sequences of random variables.
\begin{lemma}[Robbins-Siegmund Lemma, \cite{RS1971}]\label{lemma_RS}
Let $\mc F=(\mc F_k)_{k\in\NN}$ be a filtration. Let $\{\alpha_k\}_{k\in\NN}$, $\{\theta_k\}_{k\in\NN}$, $\{\eta_k\}_{k\in\NN}$ and $\{\chi_k\}_{k\in\NN}$ be non negative sequences such that $\sum_k\eta_k<\infty$, $\sum_k\chi_k<\infty$ and let
$$\forall k\in\NN, \quad \EE[\alpha_{k+1}|\mc F_k]+\theta_k\leq (1+\chi_k)\alpha_k+\eta_k \quad a.s.$$
Then $\sum_k \theta_k<\infty$ and $\{\alpha_k\}_{k\in\NN}$ converges a.s. to a non negative random variable.\fineass
\end{lemma}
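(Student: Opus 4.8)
The plan is to reduce the stated almost-supermartingale inequality to the classical nonnegative supermartingale convergence theorem by two successive corrections: one that neutralizes the multiplicative factor $(1+\chi_k)$, and one that absorbs the additive perturbation $\eta_k$.

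First I would define the partial products $b_k=\prod_{j=0}^{k-1}(1+\chi_j)$ with $b_0=1$. Since $\log(1+\chi_j)\le\chi_j$ and $\sum_j\chi_j<\infty$, the sequence $b_k$ is nondecreasing and converges to a finite limit $b_\infty\in[1,\infty)$. Dividing the hypothesis by $b_{k+1}=(1+\chi_k)b_k$ and writing $\alpha_k'=\alpha_k/b_k$, $\theta_k'=\theta_k/b_{k+1}$, $\eta_k'=\eta_k/b_{k+1}$ turns the inequality into $\EEk{\alpha_{k+1}'}\le\alpha_k'-\theta_k'+\eta_k'$, so the multiplicative factor is gone. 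Because $b_k\ge1$, we still have $\sum_k\eta_k'\le\sum_k\eta_k<\infty$, hence the tail sums $E_k=\sum_{j\ge k}\eta_j'$ are finite and $E_k\downarrow0$.

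Next I would form the corrected process $W_k=\alpha_k'+E_k$. Using $\eta_k'+E_{k+1}=E_k$, the previous inequality yields $\EEk{W_{k+1}}\le W_k-\theta_k'\le W_k$ a.s., so $(W_k)$ is a nonnegative supermartingale adapted to $\mc F$. Doob's convergence theorem for nonnegative supermartingales then gives $W_k\to W_\infty$ a.s. for some integrable $W_\infty\ge0$; since $E_k\to0$, also $\alpha_k'\to W_\infty$ a.s., and multiplying back by the convergent factor $b_k\to b_\infty$ shows $\alpha_k\to b_\infty W_\infty$ a.s., a nonnegative random variable, which is the second claim.

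For the summability of $\theta_k$ I would take expectations in the supermartingale inequality to obtain $\EEx{\theta_k'}\le\EEx{W_k}-\EEx{W_{k+1}}$; telescoping and using $W_k\ge0$ bounds $\sum_k\EEx{\theta_k'}\le\EEx{W_0}<\infty$, whence $\sum_k\theta_k'<\infty$ a.s. by monotone convergence. Finally $\theta_k=b_{k+1}\theta_k'\le b_\infty\theta_k'$ gives $\sum_k\theta_k\le b_\infty\sum_k\theta_k'<\infty$ a.s., the first claim. The only genuinely delicate point is the construction of the auxiliary process $W_k$: both corrections, the normalization by $b_k$ and the addition of the tail $E_k$, are needed so that the $(1+\chi_k)$ and $\eta_k$ terms cancel exactly and the problem collapses to the textbook supermartingale case; the remainder is bookkeeping.
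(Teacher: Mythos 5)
The paper never proves this lemma: it is stated as a classical result and attributed to Robbins and Siegmund, so there is no in-paper argument to compare against, and your proposal has to stand on its own as a proof of the cited result. Judged that way, your reduction strategy (discount by $b_k=\prod_{j<k}(1+\chi_j)$, then absorb the additive errors, then invoke Doob) is the right idea, and it is correct \emph{provided} $\eta_k$ and $\chi_k$ are deterministic sequences and $\alpha_0$ is integrable. The genuine gap appears in the generality in which the lemma is stated and cited: in Robbins--Siegmund, $\alpha_k,\theta_k,\eta_k,\chi_k$ are nonnegative random variables adapted to $(\mc F_k)_{k\in\NN}$, with $\sum_k\eta_k<\infty$ and $\sum_k\chi_k<\infty$ only almost surely. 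Your discounting step survives this setting, since $b_{k+1}$ is $\mc F_k$-measurable whenever $\chi_k$ is adapted and can be pulled inside $\EE[\,\cdot\,|\mc F_k]$. The tail-sum correction does not: $E_k=\sum_{j\ge k}\eta_j'$ depends on the future values $\eta_j$, $j>k$, hence is \emph{not} $\mc F_k$-measurable, so $W_k=\alpha_k'+E_k$ is not adapted and the claimed inequality $\EE[W_{k+1}|\mc F_k]\le W_k-\theta_k'$ does not follow; you would need $\EE[E_{k+1}|\mc F_k]=E_{k+1}$, which fails for random $\eta_j$, and indeed $\EE[E_{k+1}|\mc F_k]$ need not even be finite, because a.s.\ summability carries no integrability. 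This is exactly the ``delicate point'' you flagged, and it is where the construction breaks. The classical remedy is to correct with \emph{adapted} partial sums instead, e.g.\ $W_k=\alpha_k'+\sum_{j<k}\theta_j'-\sum_{j<k}\eta_j'$, which restores adaptedness and the supermartingale inequality but sacrifices nonnegativity; one then stops the process at $\tau_a=\inf\{k:\sum_{j\le k}\eta_j'>a\}$ (a stopping time, since the $\eta_j'$ are adapted), applies the convergence theorem to the stopped process, which is bounded below by $-a$, and lets $a\to\infty$ along integers.

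A second, smaller gap: your telescoping step bounds $\sum_k\EE[\theta_k']$ by $\EE[W_0]$, but $\EE[W_0]<\infty$ is not among the hypotheses --- integrability of $\alpha_0$ is never assumed. This is easy to repair by localization: work on the events $A_M=\{\alpha_0\le M\}\in\mc F_0$, note that $W_k\mathbf{1}_{A_M}$ is again a nonnegative supermartingale with $\EE[W_0\mathbf{1}_{A_M}]\le M+E_0<\infty$, conclude convergence and summability a.s.\ on $A_M$, and let $M\to\infty$. With both repairs (adapted partial-sum correction plus stopping times, and localization in place of plain expectations), your supermartingale reduction does yield the full lemma. As written, it proves a weaker version --- deterministic perturbations and integrable initial condition --- which, it should be said, is all the paper actually needs: in the proof of Theorem~\ref{theo_SAA} the lemma is invoked with $\chi_k\equiv 0$, with $\eta_k$ built from the deterministic batch sizes $N_k$, and with a deterministic initialization.
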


The next lemma collects some properties that follow from the definition of the SRFB algorithm.
\begin{lemma}\label{lemma_algo}
Given Algorithm \ref{algo_i}, the following statements hold.
\begin{enumerate}
\item $\bs x^k-\bar {\bs x}^{k-1}=\frac{1}{\delta}(\bs x^k-\bar {\bs x}^k)$
\item $\bs x^{k+1}-\bs x^*=\frac{1}{1-\delta}(\bar{\bs x}^{k+1}-\bs x^*)-\frac{\delta}{1-\delta}(\bar{\bs x}^k-\bs x^*)$
\item $\frac{\delta}{(1-\delta)^2}\norm{\bar{\bs x}^{k+1}-{\bs x}^k}^2=\delta\norm{\bs x^{k+1}-{\bs x}^k}^2$
\end{enumerate}
\end{lemma}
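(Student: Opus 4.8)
The plan is to derive all three relations by elementary algebra from the single affine recursion defining the relaxation step \eqref{eq_relax}, which in the stacked variable $\bs x=\op{col}(x_{\textup g},x_{\textup d})$ reads $\bar{\bs x}^k=(1-\delta)\bs x^k+\delta\bar{\bs x}^{k-1}$; since \eqref{eq_relax} holds component-wise for each $i\in\{\textup g,\textup d\}$, it holds for the concatenation as well, so I may work directly with $\bs x$ and $\bar{\bs x}$. None of the three claims involves the projection or the pseudogradient mapping, hence no monotonicity or Lipschitz hypothesis enters; I only use that $\delta\in(0,1)$, which the range in Assumption \ref{ass_delta} provides (with the endpoint $\delta=1$ excluded, as it would make the divisions below ill-defined and the relaxation itself degenerate). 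For statement~(1) I would subtract $\bs x^k$ from both sides to isolate the relaxation increment, obtaining $\bs x^k-\bar{\bs x}^k=\delta(\bs x^k-\bar{\bs x}^{k-1})$, and then divide by $\delta>0$ to reach exactly~(1).

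For statement~(2) I would write the recursion one index ahead, $\bar{\bs x}^{k+1}=(1-\delta)\bs x^{k+1}+\delta\bar{\bs x}^k$, and solve it for the raw iterate, $\bs x^{k+1}=\tfrac{1}{1-\delta}\bar{\bs x}^{k+1}-\tfrac{\delta}{1-\delta}\bar{\bs x}^k$; the only trick is to subtract the solution $\bs x^*$ and exploit that the two coefficients sum to one, $\tfrac{1}{1-\delta}-\tfrac{\delta}{1-\delta}=1$, so that $\bs x^*$ distributes into both brackets and yields the stated telescoped form. For statement~(3) I would use the same one-index-ahead recursion to write the displacement of consecutive relaxed iterates as a scalar multiple of the step from the previous relaxed point, namely $\bar{\bs x}^{k+1}-\bar{\bs x}^k=(1-\delta)(\bs x^{k+1}-\bar{\bs x}^k)$; squaring the Euclidean norm and multiplying through by $\tfrac{\delta}{(1-\delta)^2}$ then produces the claimed equality between the relaxed-iterate displacement and the step.

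I do not expect a genuine obstacle here: the entire content of the lemma is the affine relaxation map, and the effort is purely bookkeeping — keeping the indices $k-1,k,k+1$ aligned and tracking the factors $\delta$ and $1-\delta$. The only steps deserving an explicit word are the admissibility of the two divisions, guaranteed by $0<\delta<1$, and the coefficient identity $\tfrac{1}{1-\delta}-\tfrac{\delta}{1-\delta}=1$ used in~(2); for~(3) the one point to flag is that the natural difference supplied by the recursion is the one between \emph{relaxed} iterates, which is why the squared-norm identity is most cleanly phrased in terms of $\bar{\bs x}^{k+1}-\bar{\bs x}^k$ and $\bs x^{k+1}-\bar{\bs x}^k$. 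These three rewritings are best read as preparatory identities that will later let me pass freely between the raw iterates $\bs x^k$ and the relaxed iterates $\bar{\bs x}^k$ inside the Fej\'er/descent estimate for Algorithm \ref{algo_i}.
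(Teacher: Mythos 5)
Your derivation is correct, and it is precisely the ``straightforward'' algebra that the paper's one-line proof (a pointer to Algorithm \ref{algo_i} and to Malitsky's paper) leaves to the reader, so there is no difference in approach. One of your side remarks, however, deserves to be promoted to the main point: item (3) as literally printed is \emph{not} an identity. Combining the recursion with item (1) gives $\bar{\bs x}^{k+1}-\bs x^k=(1-\delta)(\bs x^{k+1}-\bs x^k)-\delta(\bs x^{k}-\bar{\bs x}^{k})=(1-\delta)(\bs x^{k+1}-\bs x^k)-\delta^{2}(\bs x^{k}-\bar{\bs x}^{k-1})$, so $\norm{\bar{\bs x}^{k+1}-\bs x^k}$ is not a fixed multiple of $\norm{\bs x^{k+1}-\bs x^k}$ unless $\bs x^k=\bar{\bs x}^{k-1}$; the overbars are simply missing in the printed statement. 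The identity you actually prove, $\frac{\delta}{(1-\delta)^{2}}\normsq{\bar{\bs x}^{k+1}-\bar{\bs x}^{k}}=\delta\normsq{\bs x^{k+1}-\bar{\bs x}^{k}}$, is the correct one and the one the paper needs: applying \eqref{norm_conv} with $\alpha=\frac{1}{1-\delta}$ to the convex combination in item (2) produces the cross term $\frac{\delta}{(1-\delta)^{2}}\normsq{\bar{\bs x}^{k+1}-\bar{\bs x}^{k}}$, and your version of (3) is exactly what converts it into the last term of \eqref{step_delta} (where the same bar is dropped again). Your other caveat is also apt: Assumption \ref{ass_delta} permits $\delta=1$, at which items (2) and (3) are ill-defined, so the lemma implicitly requires $\delta<1$ (and $\delta>0$ for item (1), which the lower bound $\frac{2}{1+\sqrt{5}}$ guarantees).
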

\begin{proof}
Straightforward from Algorithm \ref{algo_i} and \cite{malitsky2019}.
\end{proof}

\begin{figure}[H]
\begin{subfigure}{\columnwidth}
\centering
\includegraphics[width=.8\textwidth]{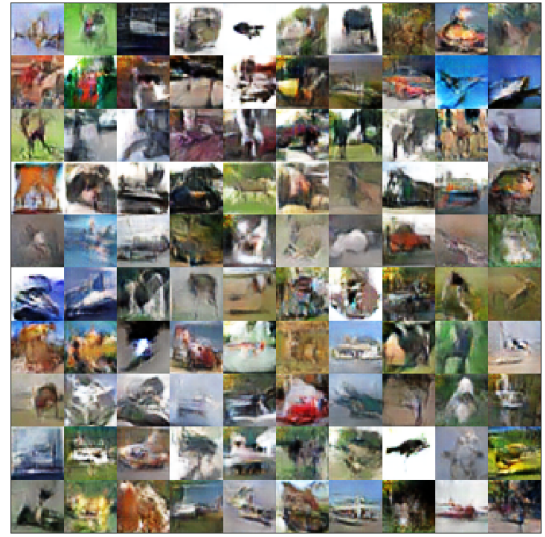}
\caption{}
\end{subfigure}
\begin{subfigure}{\columnwidth}
\centering
\includegraphics[width=.8\textwidth]{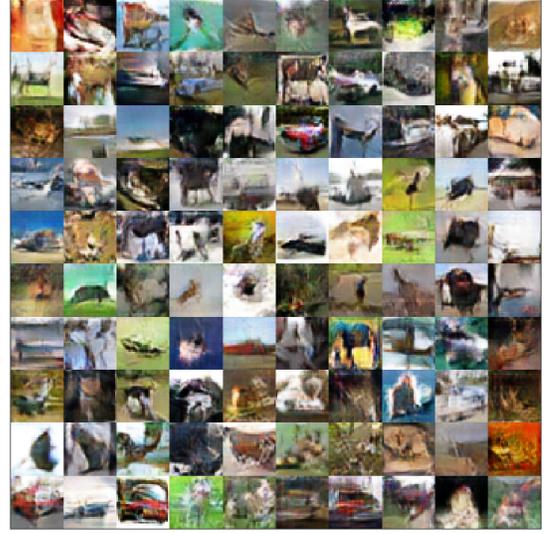}
\caption{}
\end{subfigure}
%\end{figure}
%\begin{figure}[h]
\begin{subfigure}{\columnwidth}
\centering
\includegraphics[width=.8\textwidth]{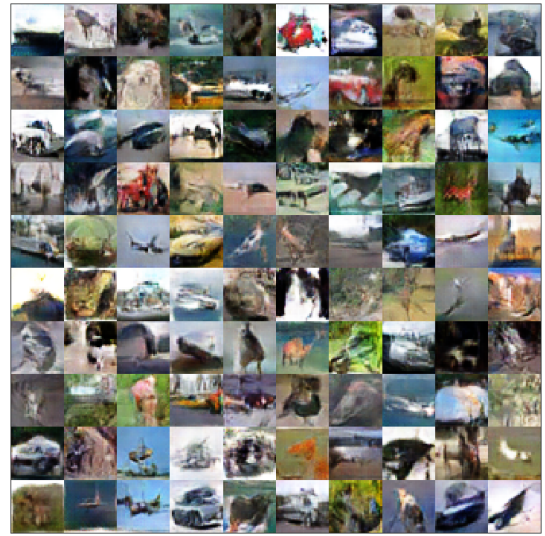}
\caption{}
\end{subfigure}
\caption{Generated images with the SRFB algorithm (a), with the aSFRB algorithm (b) and with the EG algorithm (c).}\label{fig_images}
\end{figure}

\section{Proof of Theorem \ref{theo_SAA}}\label{app_theo_SAA}

\begin{proof}[Proof of Theorem \ref{theo_SAA}]
Using the property of projection operator (\ref{proj}) we have
\begin{equation}\label{stepNk1}
\langle \bs x^{k+1}-\bar {\bs x}^k +\lambda F^{\textup{VR}}(\bs x^k,\xi^k),\bs x^*-\bs x^{k+1}\rangle\geq0,
\end{equation}
\begin{equation}\label{stepNk2}
\langle \bs x^k-\bar {\bs x}^{k-1} +\lambda F^{\textup{VR}}(\bs x^{k-1},\xi^{k-1}),\bs x^{k+1}-\bs x^k\rangle\geq0.
\end{equation}
Using Lemma \ref{lemma_algo}.1, (\ref{stepNk2}) becomes
\begin{equation}\label{stepNk3}
\langle \frac{1}{\delta}(\bs x^k-\bar {\bs x}^k)+\lambda F^{\textup{VR}}(\bs x^{k-1},\xi^{k-1}),\bs x^{k+1}-\bs x^k\rangle\geq 0.
\end{equation}
Then, adding (\ref{stepNk1}) and (\ref{stepNk3}) we obtain
\begin{equation}\label{stepNk4}
\begin{aligned}
&\langle \bs x^{k+1}-\bar {\bs x}^k +\lambda F^{\textup{VR}}(\bs x^k,\xi^k),\bs x^*-\bs x^{k+1}\rangle+\\
&+\langle \frac{1}{\delta}(\bs x^k-\bar {\bs x}^k)+\lambda F^{\textup{VR}}(\bs x^{k-1},\xi^{k-1}),\bs x^{k+1}-\bs x^k\rangle\geq0.
\end{aligned}
\end{equation}
Now we use the cosine rule (\ref{cosine}),
\begin{equation*}\label{cos_zeta}
\begin{aligned}
&\langle \bs x^{k+1}-\bar {\bs x}^k,\bs x^*-\bs x^{k+1}\rangle=\\
&\textstyle{-\frac{1}{2}(\norm{\bs x^{k+1}-\bar {\bs x}^k}^2+\norm{\bs x^{k+1}-\bs x^*}^2-\norm{\bs x^*-\bar {\bs x}^k}^2)}\\
&\textstyle{\langle \frac{1}{\delta}(\bs x^k-\bar {\bs x}^k),\bs x^{k+1}-\bs x^k\rangle=}\\
&\textstyle{-\frac{1}{2\delta}(\norm{\bs x^k-\bar {\bs x}^k}^2+\norm{\bs x^k-\bs x^{k+1}}^2-\norm{\bs x^{k+1}-\bar {\bs x}^k}^2)}
\end{aligned}
\end{equation*}
and we note that
\begin{equation*}\label{term_in_F}
\begin{aligned}
&\lambda\langle F^{\textup{VR}}(\bs x^k,\xi^k),\bs x^*-\bs x^{k+1}\rangle\\
&=-\lambda\langle\FF(\bs x^k),\bs x^k-\bs x^*\rangle+\langle \epsilon^k,\bs x^*-\bs x^k\rangle+\\
&+\lambda\langle\FF(\bs x^k),\bs x^k-\bs x^{k+1}\rangle+\langle \epsilon^k,\bs x^k-\bs x^{k+1}\rangle.
\end{aligned}
\end{equation*}

Then, by reordering and substituting in (\ref{stepNk4}), we obtain:
\begin{equation}\label{stepNk7}
\begin{aligned}
-&\norm{\bs x^{k+1}-\bar {\bs x}^k}^2-\norm{\bs x^{k+1}-\bs x^*}^2+\norm{\bs x^*-\bar {\bs x}^k}^2+\\
-&\textstyle{\frac{1}{\delta}}\norm{\bs x^k-\bar {\bs x}^k}^2-\textstyle{\frac{1}{\delta}}\norm{\bs x^k-\bs x^{k+1}}^2+\textstyle{\frac{1}{\delta}}\norm{\bs x^{k+1}-\bar {\bs x}^k}^2+\\
-&2\lambda\langle\FF(\bs x^k),\bs x^k-\bs x^*\rangle+2\lambda\langle\varepsilon^k,\bs x^*-\bs x^k\rangle+\\
+&2\lambda\langle \FF(\bs x^k)-\FF(\bs x^{k-1}),\bs x^k-\bs x^{k+1}\rangle+\\
+&2\lambda\langle \varepsilon^k-\varepsilon^{k-1},\bs x^k-\bs x^{k+1}\rangle\geq 0.
\end{aligned}
\end{equation}
Since $\FF$ is monotone, it holds that
%\begin{equation}\label{mono}
$\langle\FF(\bs x^k),\bs x^k-\bs x^*\rangle\geq\langle\FF(\bs  x^*),\bs x^k-\bs x^*\rangle\geq0.$
%\end{equation}
%\tcb{Now we apply Lemma \ref{lemma_algo}.2 and Lemma \ref{lemma_algo}.3 to $\norm{\bs x^{k+1}-\bs x^*}$:
%\begin{equation}\label{step_delta}
%\begin{aligned}
%\norm{\bs x^{k+1}-\bs x^*}^2
%=&\frac{1}{1-\delta}\norm{\bar{\bs x}^{k+1}-\bs x^*}^2-\frac{\delta}{1-\delta}\norm{\bar{\bs x}^k-\bs x^*}^2+\\
%&+\delta\norm{\bs x^{k+1}-{\bs x}^k}^2.
%\end{aligned}
%\end{equation}}
By using Lemma \ref{lemma_algo}.2 and \ref{lemma_algo}.3 as in (\ref{step_delta}), and substituting in (\ref{stepNk7}), 
grouping and reordering, we get
\begin{equation}\label{stepNk9}
\begin{aligned}
&\textstyle{\frac{1}{1-\delta}\norm{\bar{\bs x}^{k+1}-\bs x^*}^2+\frac{1}{\delta}\norm{\bs x^k-\bs x^{k+1}}^2}\\
&\leq\textstyle{ \left(\frac{\delta}{1-\delta}+1\right)\norm{\bs x^*-\bar {\bs x}^k}^2-\frac{1}{\delta}\norm{\bs x^k-\bar {\bs x}^k}^2+}\\
&+2\lambda\langle \FF(\bs x^k)-\FF(\bs x^{k-1}),\bs x^k-\bs x^{k+1}\rangle\\
&+2\lambda\langle\varepsilon^k,\bs x^*-\bs x^k\rangle+2\lambda\langle \varepsilon^k-\varepsilon^{k-1},\bs x^k-\bs x^{k+1}\rangle
\end{aligned}
\end{equation}
where we used Assumption \ref{ass_delta}. 
Moreover, by using Lipschitz continuity of $\FF$ and Cauchy-Schwartz and Young's inequality, it follows that
\begin{equation*}
\begin{aligned}
&\lambda\langle \FF(\bs x^k)-\FF(\bs x^{k-1}),\bs x^k-\bs x^{k+1}\rangle \\
&\leq\textstyle{\frac{\ell\lambda}{2}}(\normsq{\bs x^k-\bs x^{k-1}}+\normsq{\bs x^k-\bs x^{k+1}}).
\end{aligned}
\end{equation*}
Similarly we can bound the term involving the stochastic errors:
\begin{equation*}
\begin{aligned}
&2\lambda\langle \varepsilon^k-\varepsilon^{k-1},\bs x^k-\bs x^{k+1}\rangle\\
&\leq2\lambda\norm{\varepsilon^k-\varepsilon^{k-1}}\norm{\bs x^k-\bs x^{k+1}}\\
&\leq\lambda\normsq{\varepsilon^k-\varepsilon^{k-1}}+\lambda\normsq{\bs x^k-\bs x^{k+1}}.
\end{aligned}
\end{equation*}
By substituting in (\ref{stepNk9}), we conclude that
\begin{equation}\label{step_final}
\begin{aligned}
&\textstyle{\frac{1}{1-\delta}\norm{\bar{\bs x}^{k+1}-\bs x^*}^2+\frac{1}{\delta}\norm{\bs x^k-\bs x^{k+1}}^2}\\
&\leq\textstyle{\frac{1}{1-\delta}\norm{\bs x^*-\bar {\bs x}^k}^2-\frac{1}{\delta}\norm{\bs x^k-\bar {\bs x}^k}^2+}\\
&+\ell\lambda\left(\normsq{\bs x^k-\bs x^{k-1}}+\normsq{\bs x^k-\bs x^{k+1}}\right)+\\
&+ \lambda\normsq{\varepsilon^k-\varepsilon^{k-1}}+\lambda\normsq{\bs x^k-\bs x^{k+1}}\\
&+2\lambda\langle\varepsilon^k,\bs x^*-\bs x^k\rangle
\end{aligned}
\end{equation}
Now, we consider the residual function of $\bs x^k$:
\begin{equation*}
\begin{aligned}
\op{res}(\bs x^k)^2&=\normsq{\bs x^k-\op{proj}(\bs x^k-\lambda\FF(\bs x^k)}\\
&\leq 2\normsq{\bs x^k-\bs x^{k+1}}+2\normsq{\bar{\bs x}_k-\bs x^k+\lambda\varepsilon_k}\\
&\leq2\normsq{\bs x^k-\bs x^{k+1}}+4\normsq{\bar{\bs x}_k-\bs x^k}+4\lambda^2\normsq{\varepsilon_k}
\end{aligned}
\end{equation*}
where we added and subtracted $\bs x^{k+1}=\op{proj}(\bar{\bs x}_k-\lambda F^{\textup{VR}}(\bs x^k))$ in the first inequality and used the firmly non expansiveness of the projection and \eqref{norm_sum}.
It follows that
\begin{equation*}
\textstyle{\normsq{\bar{\bs x}_k-\bs x^k}\geq\frac{1}{4}\op{res}(\bs x^k)^2-\frac{1}{2}\normsq{\bs x^k-\bs x^{k+1}}-\lambda^2\normsq{\varepsilon_k}}
\end{equation*}
By substituting in \eqref{step_final}, we have that
\begin{equation*}
\begin{aligned}
&\textstyle{\frac{1}{1-\delta}\norm{\bar{\bs x}^{k+1}-\bs x^*}^2+\frac{1}{\delta}\norm{\bs x^k-\bs x^{k+1}}^2\leq\frac{1}{1-\delta}\norm{\bs x^*-\bar {\bs x}^k}^2}\\
&\textstyle{-\frac{1}{\delta}\left(\frac{1}{4}\op{res}(\bs x^k)^2-\frac{1}{2}\normsq{\bs x^k-\bs x^{k+1}}-\lambda^2\normsq{\varepsilon_k}\right)}\\
&+\ell\lambda\left(\normsq{\bs x^k-\bs x^{k-1}}+\normsq{\bs x^k-\bs x^{k+1}}\right)\\
&+ \lambda\normsq{\varepsilon^k-\varepsilon^{k-1}}+\lambda\normsq{\bs x^k-\bs x^{k+1}}+2\lambda\langle\varepsilon^k,\bs x^*-\bs x^k\rangle.
\end{aligned}
\end{equation*}
Finally, by taking the expected value, grouping and using Remark \ref{remark_error} and Assumptions \ref{ass_error} and \ref{ass_delta}, we have
\begin{equation*}\label{stepNk10}
\begin{aligned}
&\textstyle{\EEk{\frac{1}{1-\delta}\norm{\bar{\bs x}^{k+1}-\bs x^*}^2}}+\\
&\textstyle{+\EEk{\left(\frac{1}{2\delta}-\ell\lambda-\lambda\right)\norm{\bs x^k-\bs x^{k+1}}^2}}\\
&\textstyle{\leq\frac{1}{1-\delta}\norm{\bs x^*-\bar {\bs x}^k}^2+\ell\lambda\norm{\bs x^k-\bs x^{k-1}}^2+}\\
&\textstyle{+\frac{2\lambda C\sigma}{N_k}+\frac{2\lambda C\sigma}{N_{k-1}}+\frac{\lambda}{\delta}\frac{C\sigma}{N_k}}\\
&\textstyle{-\frac{1}{\delta}\norm{\bs x^k-\bar {\bs x}^k}^2-\frac{1}{4\delta}\op{res}(\bs x^k)^2.}\\
\end{aligned}
\end{equation*}
To use Lemma \ref{lemma_RS}, let
$\alpha_k= \frac{1}{1-\delta}\norm{\bs x^*-\bar {\bs x}^k}^2+\ell\lambda\norm{\bs x^k-\bs x^{k-1}}^2,$
$\theta_k=\frac{1}{\delta}\norm{\bs x^k-\bar {\bs x}^k}^2+\frac{1}{4\delta}\op{res}(\bs x^k)^2,$
$\eta_k=\frac{2\lambda C\sigma}{N_k}+\frac{2\lambda C\sigma}{N_{k-1}}+\frac{\lambda}{\delta}\frac{C\sigma}{N_k}.$
Applying the Robbins Siegmund Lemma we conclude that $\alpha_k$ converges and that $\sum_{k\in\NN}\theta_k$ is summable. This implies that the sequence $(\bar {\bs x}^k)_{k\in\NN}$ is bounded and that $\|\bs x^k-\bar {\bs x}^k\|\to 0$ (otherwise $\sum \frac{1}{\delta}\|\bs x^k-\bar {\bs x}^k\|^2=\infty$). Therefore $(\bs x^k)_{k\in\NN}$ has at least one cluster point $\tilde{\bs x}$. Moreover, since $\sum_{k\in\NN}\theta_k<\infty$, $\op{res}(\bs x^k)^2\to0$ and $\op{res}(\tilde{\bs x}^k)^2=0$.
\end{proof}

\section{Proof of Theorem \ref{theo_ave}}\label{app_theo_ave}

\begin{proof}[Proof of Theorem \ref{theo_ave}]
We start by using the fact that the projection is firmly quasinonexpansive.
\begin{equation*}\label{sdet1}
\begin{aligned}
&\normsq{\bs x^{k+1}-\bs x^*}\\
\leq&\normsq{\bs x^*-\tilde{\bs x}^k+\lambda F^{\textup{SA}}(\bs x^k,\xi^k)}\\
&-\normsq{\bar {\bs x}^k-\lambda F^{\textup{SA}}(\bs x^k,\xi^k)-\bs x^{k+1}}\\
\leq&\normsq{\bs x-\bar{\bs x}^k}-\normsq{\bar{\bs x}^k-\bs x^{k+1}}+2\lambda_k\langle F^{\textup{SA}}(\bs x^k,\xi^k),\bs x^*-\bar{\bs x}^k\rangle\\
&+2\lambda_k\langle F^{\textup{SA}}(\bs x^k,\xi^k),\bar{\bs x}^k-\bs x^{k+1}\rangle\\
=&\normsq{\bs x^*-\bar{\bs x}^k}-\normsq{\bar{\bs x}^k-\bs x^{k+1}}+2\lambda_k\langle F^{\textup{SA}}(\bs x^k,\xi^k),\bar{\bs x}^k-\bs x^{k+1}\rangle\\
&+2\lambda_k\langle F^{\textup{SA}}(\bs x^k,\xi^k),\bs x^*-\bs x^k\rangle+2\lambda_k\langle F^{\textup{SA}}(\bs x^k,\xi^k),\bs x^k-\bar{\bs x}^k\rangle\\
\end{aligned}
\end{equation*}
Now we apply Lemma \ref{lemma_algo}.2 and Lemma \ref{lemma_algo}.3 to $\norm{\bs x^{k+1}-\bs x^*}$:
\begin{equation}\label{step_delta}
\begin{aligned}
\norm{\bs x^{k+1}-\bs x^*}^2
=&\textstyle{\frac{1}{1-\delta}\norm{\bar{\bs x}^{k+1}-\bs x^*}^2-\frac{\delta}{1-\delta}\norm{\bar{\bs x}^k-\bs x^*}^2+}\\
&+\delta\norm{\bs x^{k+1}-{\bs x}^k}^2.
\end{aligned}
\end{equation}
Then, we can rewrite the inequality as
\begin{equation}\label{sdet2}
\begin{aligned}
&\textstyle{\frac{1}{1-\delta}\normsq{\bar{\bs x}^{k+1}-\bs x^*}\leq\frac{1}{1-\delta}\normsq{\bar{\bs x}^k-\bs x^*}}\\
&+2\lambda_k\langle F^{\textup{SA}}(\bs x^k,\xi^k),\bs x^*-\bs x^k\rangle+2\lambda_k\langle F^{\textup{SA}}(\bs x^k,\xi^k),\bs x^k-\bar{\bs x}^k\rangle\\
&+2\lambda_k\langle F^{\textup{SA}}(\bs x^k,\xi^k),\bar{\bs x}^k-\bs x^{k+1}\rangle-(\delta+1)\normsq{\bs x^{k+1}-\bar{\bs x}^k}\\
\end{aligned}
\end{equation}
By applying Young's inequality we obtain
\begin{equation*}
\begin{aligned}
2\lambda_k&\langle F^{\textup{SA}}(\bs x^k,\xi^k),\bs x^k-\bar{\bs x}^k\rangle\\
&\leq\lambda_k^2\normsq{F^{\textup{SA}}(\bs x^k,\xi^k)}+\normsq{\bs x^k-\bar{\bs x}^k}\\
2\lambda_k&\langle F^{\textup{SA}}(\bs x^k,\xi^k),\bar{\bs x}^k-\bs x^{k+1}\rangle\\
&\leq\lambda_k^2\normsq{F^{\textup{SA}}(\bs x^k,\xi^k)}+\normsq{\bar{\bs x}^k-\bs x^{k+1}}
\end{aligned}
\end{equation*}
Then, inequality (\ref{sdet2}) becomes
\begin{equation}\label{sdet3}
\begin{aligned}
&\textstyle{\frac{1}{1-\delta}\normsq{\bar{\bs x}^{k+1}-\bs x^*}
\leq\frac{1}{1-\delta}\normsq{\bar{\bs x}^k-\bs x^*}}\\
&+2\lambda_k\langle F^{\textup{SA}}(\bs x^k,\xi^k),\bs x^*-\bs x^k\rangle\\
&+2\lambda_k^2\normsq{F^{\textup{SA}}(\bs x^k,\xi^k)}-(\delta+1)\normsq{\bs x^{k+1}-\bar{\bs x}^k}\\
&+\normsq{\bs x^k-\bar{\bs x}^k}+\normsq{\bar{\bs x}^k-\bs x^{k+1}}\\
\end{aligned}
\end{equation}
Reordering, adding and subtracting $2\lambda_k\langle\FF(\bs x^k),\bs x^k-\bs x^*\rangle$ and using Lemma \ref{lemma_algo}, we obtain
\begin{equation}\label{sdet3}
\begin{aligned}
&\textstyle{\frac{1}{1-\delta}\normsq{\bar{\bs x}^{k+1}-\bs x^*}+\delta\normsq{\bs x^{k+1}-\bar{\bs x}^k}\leq\frac{1}{1-\delta}\normsq{\bar{\bs x}^k-\bs x^*}}\\
&+2\lambda_k\langle\FF(\bs x^k)-F^{\textup{SA}}(\bs x^k,\xi^k),\bs x^k-\bs x^*\rangle-2\lambda_k\langle\FF(\bs x^k),\bs x^k-\bs x^*\rangle\\
&+2\lambda_k^2\normsq{F^{\textup{SA}}(\bs x^k,\xi^k)}+\delta^2\normsq{\bs x^k-\bs x^{k-1}}\\
\end{aligned}
\end{equation}
Then, by the definition of $\epsilon^k$, reordering leads to
\begin{equation}\label{eq_step}
\begin{aligned}
2\lambda_k&\langle\FF(\bs x^k),\bs x^k-\bs x^*\rangle\\
\leq&\textstyle{\frac{1}{1-\delta}}(\normsq{\bar{\bs x}^k-\bs x^*}-\normsq{\bar{\bs x}^{k+1}-\bs x^*})+\\
&+\delta(\normsq{\bs x^k-\bs x^{k-1}}-\normsq{\bs x^{k+1}-\bar{\bs x}^k})\\
&+2\lambda_k^2\normsq{F^{\textup{SA}}(\bs x^k,\xi^k)}+2\lambda_k\langle\epsilon^k,\bs x^k-\bs x^*\rangle
\end{aligned}
\end{equation}
Next, we sum over all the iterations, hence inequality \eqref{eq_step} becomes
\begin{equation}\label{sdet3}
\begin{aligned}
2\sum\nolimits_{k=1}^K\lambda_k&\langle\FF(\bs x^k),\bs x^k-\bs x^*\rangle\leq2\sum\nolimits_{k=1}^K\lambda_k\langle\epsilon^k,\bs x^k-\bs x^*\rangle\\
\leq&\textstyle{\frac{1}{1-\delta}\sum\nolimits_{k=1}^K(\normsq{\bar{\bs x}^k-\bs x^*}-\normsq{\bar{\bs x}^{k+1}-\bs x^*})+}\\
&+\textstyle{\delta\sum\nolimits_{k=1}^K(\normsq{\bs x^k-\bs x^{k-1}}-\normsq{\bs x^{k+1}-\bar{\bs x}^k})}+\\
&+\textstyle{2\sum\nolimits_{k=1}^K\lambda_k^2\normsq{F^{\textup{SA}}(\bs x^k,\xi^k)}}
\end{aligned}
\end{equation}
Using Assumption \ref{ass_mono} and resolving the sums, we obtain
\begin{equation}\label{step}
\begin{aligned}
2\sum\nolimits_{k=1}^K\lambda_k&\langle\FF(\bs x^*),\bs x^k-\bs x^*\rangle\leq2\sum\nolimits_{k=1}^K\lambda_k\langle\epsilon^k,\bs x^k-\bs x^*\rangle\\
\leq&\textstyle{\frac{1}{1-\delta}\normsq{\bar{\bs x}^0-\bs x^*}+\delta\normsq{\bs x^{0}-\bar{\bs x}^{-1}}+}\\
&+2\sum\nolimits_{k=1}^K\lambda_k^2\normsq{F^{\textup{SA}}(\bs x^k,\xi^k)}
\end{aligned}
\end{equation}
Now, we note that $\langle\epsilon^k,\bs x^k-\bs x^*\rangle=\langle\epsilon^k,\bs x^k-\bs u^k\rangle+\langle\epsilon^k,\bs u^k-\bs x^*\rangle.$
We define $\bs u^0=\bs x^0$ and $\bs u^{k+1}=\op{proj}(\bs u^k-\lambda_k\epsilon^k)$, thus
\begin{equation}
\begin{aligned}
\|&\bs u^{k+1}-\bs x^*\|^2=\normsq{\op{proj}(\bs u^k-\lambda_k\epsilon^k)-\bs x^*}\\
&\leq\normsq{\bs u^k-\lambda_k\epsilon^k-\bs x^*}\\
&\leq\normsq{\bs u^k-\bs x^*}+\lambda_k\normsq{\epsilon^k}-2\lambda_k\langle\epsilon^k,\bs u^k-\bs x^*\rangle
\end{aligned}
\end{equation}
Therefore,
%\begin{equation*}
%\begin{aligned}
$2\lambda_k\langle\epsilon^k,\bs x^k-\bs x^*\rangle=2\lambda_k\langle\epsilon^k,\bs x^k-\bs u^k\rangle+\normsq{\bs u^k-\bs x^*}+\lambda_k\normsq{\epsilon^k}-\normsq{\bs u^{k+1}-\bs x^*}.$
%\end{aligned}
%\end{equation*}
By including this in \eqref{step} and by doing the sum, we obtain
\begin{equation}\label{step2}
\begin{aligned}
2\sum\nolimits_{k=1}^K\lambda_k&\langle\FF(\bs x^*),\bs x^k-\bs x^*\rangle\\
\leq&\textstyle{\frac{1}{1-\delta}\normsq{\bar{\bs x}^0-\bs x^*}+\delta\normsq{\bs x^{0}-\bar{\bs x}^{-1}}+}\\
&+2\sum\nolimits_{k=1}^K\lambda_k^2\normsq{F^{\textup{SA}}(\bs x^k,\xi^k)}+\sum\nolimits_{k=1}^K \lambda_k^2\normsq{\epsilon^k}\\
&+\normsq{\bs u_0-\bs x^*}+2\sum\nolimits_{k=1}^K \lambda_k\langle\epsilon^k,\bs x^k-\bs u^k\rangle
\end{aligned}
\end{equation}
By definition, $\normsq{\bs u^0-\bs x^*}=\normsq{\bs x^0-\bs x^*}$. Then, tby aking the expected value in \eqref{step2} and using Assumption \ref{ass_error}, we conclude that
\begin{equation}
\begin{aligned}
2\sum\nolimits_{k=1}^K\lambda_k&\langle\FF(\bs x^*),\bs x^k-\bs x^*\rangle\\
\leq&\textstyle{\left(\frac{1}{1-\delta}+1\right)\normsq{\bar{\bs x}^0-\bs x^*}+\delta\normsq{\bs x^{0}-\bar{\bs x}^{-1}}}+\\
&+2\sum\nolimits_{k=1}^K\lambda_k^2\EEk{\normsq{F^{\textup{SA}}(\bs x^k,\xi^k)}}+\\
&+\sum\nolimits_{k=1}^K\lambda_k^2\EEk{\normsq{\epsilon^k}}.
\end{aligned}
\end{equation}
Let us define $S=\sum\nolimits_{k=1}^K\lambda_k$, $\bs X^K=\frac{\sum\nolimits_{k=1}^K\lambda_k\bs x^k}{\sum\nolimits_{k=1}^K\lambda_k}=\frac{1}{S}\sum\nolimits_{k=1}^K\lambda_k\bs x^k$. Then,
\begin{equation}
\begin{aligned}
2S&\langle\FF(\bs x^*),\bs X^K-\bs x^*\rangle\\
\leq&\textstyle{\frac{2-\delta}{1-\delta}\normsq{\bar{\bs x}^0-\bs x^*}+\delta\normsq{\bs x^{0}-\bar{\bs x}^{-1}}+}\\
&+2\sum\nolimits_{k=1}^K\lambda_k^2\EEk{\normsq{F^{\textup{SA}}(\bs x^k,\xi^k)}}+\\
&+\sum\nolimits_{k=1}^K\lambda_k^2\EEk{\normsq{\epsilon^k}}\\
&\textstyle{\leq\frac{2-\delta^2}{1-\delta}R+(2B^2+\sigma^2)\sum\nolimits_{k=1}^K\lambda_k.}
\end{aligned}
\end{equation}
Finally, it holds that if $\lambda_k$ is constant, $S=K\lambda$ and $\sum\nolimits_{k=1}^K\lambda_k^2=K\lambda^2$
\begin{equation*}
\langle\FF(\bs x^*),\bs X^K-\bs x^*\rangle\leq \frac{cR}{K\lambda}+(2B^2+\sigma^2)\lambda.\vspace{-.6cm}
\end{equation*}

\end{proof}

\ifCLASSOPTIONcaptionsoff
  \newpage
\fi

\bibliographystyle{IEEEtran}
%\bibliography{IEEEabrv,Biblio}
\bibliography{IEEEabrv,Biblio_abrv}
\balance

\begin{IEEEbiography}[{\includegraphics[scale=.37]{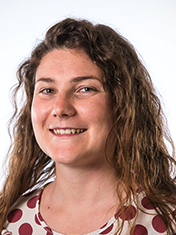}}]{Barbara Franci} is a PostDoc at the Delft Center for Systems and Control, Delft University of Technology, Delft, The Netherlands. Her current research interests are on Game Theory and its applications.

She received the Bachelor's and Master's degree in Pure Mathematics from University of Siena, Siena, Italy, respectively in 2012 and 2014. Then, she received her PhD from Politecnico of Turin and University of Turin, Turin, Italy, in 2018. In September - December 2016 she visited the Department of Mechanical Engineering, University of California, Santa Barbara, USA.
She was awarded in 2017 with the PhD Quality Award by the Academic Board of Politecnico di Torino.
\end{IEEEbiography}

% if you will not have a photo at all:
\begin{IEEEbiography}[{\includegraphics[scale=.37]{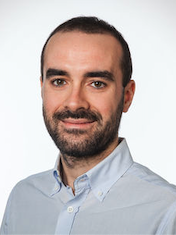}}]{Sergio Grammatico} is an Associate Professor at the Delft Center for Systems and Control, Delft University of Technology, The Netherlands. 

Born in Italy, in 1987, he received the Bachelor's degree (summa cum laude) in Computer Engineering, the Master's degree (summa cum laude) in Automatic Control Engineering, and the Ph.D. degree in Automatic Control, all from the University of Pisa, Italy, in February 2008, October 2009, and March 2013 respectively. He also received a Master's degree (summa cum laude) in Engineering Science from the Sant'Anna School of Advanced Studies, Pisa, Italy, in November 2011. 
In February-April 2010 and in November-December 2011, he visited the Department of Mathematics, University of Hawaii at Manoa, USA; in January-July 2012, he visited the Department of Electrical and Computer Engineering, University of California at Santa Barbara, USA. 

In 2013-2015, he was a post-doctoral Research Fellow in the Automatic Control Laboratory, ETH Zurich, Switzerland. In 2015-2018, he was an Assistant Professor, first in the Department of Electrical Engineering, Control Systems, TU Eindhoven, The Netherlands, then at the Delft Center for Systems and Control, TU Delft, The Netherlands. 

He was awarded a 2005 F. Severi B.Sc. Scholarship by the Italian High-Mathematics National Institute, and a 2008 M.Sc. Fellowship by the Sant’Anna School of Advanced Studies. He was awarded 2013 and 2014 “TAC Outstanding Reviewer” by the Editorial Board of the IEEE Trans. on Automatic Control, IEEE Control Systems Society. He was recipient of the Best Paper Award at the 2016 ISDG International Conference on Network Games, Control and Optimization.
\end{IEEEbiography}

% that's all folks
\end{document}